\documentclass{article}

\usepackage{microtype}
\usepackage{graphicx}
\usepackage{booktabs} 

\usepackage{hyperref}


\usepackage[accepted]{icml2025}

\usepackage{amsmath}
\usepackage{amssymb}
\usepackage{mathtools}
\usepackage{amsthm}

\usepackage{subcaption}

\usepackage{enumitem}

\usepackage[capitalize,noabbrev]{cleveref}

\numberwithin{equation}{section}

\creflabelformat{equation}{#2\textup{#1}#3}

\theoremstyle{plain}
\newtheorem{theorem}{Theorem}[section]

\newtheorem{lemma}[theorem]{Lemma}
\newtheorem{corollary}[theorem]{Corollary}
\theoremstyle{definition}
\newtheorem{definition}[theorem]{Definition}

\theoremstyle{remark}

\DeclareMathAlphabet{\mathmybb}{U}{bbold}{m}{n}

\begin{document}

\twocolumn[
\icmltitle{BILBO: BILevel Bayesian Optimization}


\icmlsetsymbol{equal}{*}

\begin{icmlauthorlist}
\icmlauthor{Ruth Wan Theng Chew}{1}
\icmlauthor{Quoc Phong Nguyen}{2}
\icmlauthor{Bryan Kian Hsiang Low}{3}
\end{icmlauthorlist}

\icmlaffiliation{1}{Institute of Data Science, National University of Singapore, Singapore}
\icmlaffiliation{2}{Amazon, Australia}
\icmlaffiliation{3}{Department of Computer Science, National University of Singapore, Singapore}

\icmlcorrespondingauthor{Ruth Wan Theng Chew}{ruthchew@nus.edu.sg}

\icmlkeywords{machine learning,Bayesian optimization,bilevel optimization,bilevel Bayesian optimization}

\vskip 0.3in
]

\printAffiliationsAndNotice{} 

\begin{abstract}
Bilevel optimization is characterized by a two-level optimization structure, where the upper-level problem is constrained by optimal lower-level solutions, and such structures are prevalent in real-world problems. The constraint by optimal lower-level solutions poses significant challenges, especially in noisy, constrained, and derivative-free settings, as repeating lower-level optimizations is sample inefficient and predicted lower-level solutions may be suboptimal. We present BILevel Bayesian Optimization (BILBO), a novel Bayesian optimization algorithm for general bilevel problems with blackbox functions, which optimizes both upper- and lower-level problems simultaneously, without the repeated lower-level optimization required by existing methods. BILBO samples from confidence-bounds based trusted sets, which bounds the suboptimality on the lower level. Moreover, BILBO selects only one function query per iteration, where the function query selection strategy incorporates the uncertainty of estimated lower-level solutions and includes a conditional reassignment of the query to encourage exploration of the lower-level objective. The performance of BILBO is theoretically guaranteed with a sublinear regret bound for commonly used kernels and is empirically evaluated on several synthetic and real-world problems.
\end{abstract}

\section{Introduction}

Many real-world problems involve hierarchical decision-making with two levels of optimization. Decisions made at the upper level affect lower-level optimization, while optimal lower-level solutions constrain decisions at the upper level. Bilevel optimization effectively models such hierarchical structures, enabling analysis of these interdependent problems. A simple example of bilevel optimization is in \cref{fig:illu_bi}. Applications of bilevel optimization range from machine learning (e.g., hyperparameter optimization, meta-learning) to economic problems (e.g., pricing strategies, toll setting) \citep{beck2021gentle}. In energy management, energy providers determine optimal pricing strategies for electricity (upper level) while consumers optimize their electricity demands based on the pricing (lower level). Similarly, in investment, brokers set fees on different asset classes to maximize their revenues (upper level), while investors optimize their portfolios for expected returns and risk (lower level). Bilevel optimization has been applied in both cases \citep{shu2018bi, leal2020portfolio}, typically using a nested framework with linear solvers at the lower level. This approach may limit practical effectiveness but it is due to the inherent complexity of bilevel optimization. Even with only linear constraints and objective functions, the set of feasible solutions can be non-convex and non-continuous \citep{kleinert2021survey}. Lower-level solutions that are $\epsilon$-feasible w.r.t.~non-linear constraints may also lead to a solution arbitrarily far from the bilevel solution \citep{beck2023computationally}.

\begin{figure}
    \centering
    \begin{subfigure}[b]{0.23\textwidth}
        \centering
        \includegraphics[width=\textwidth]{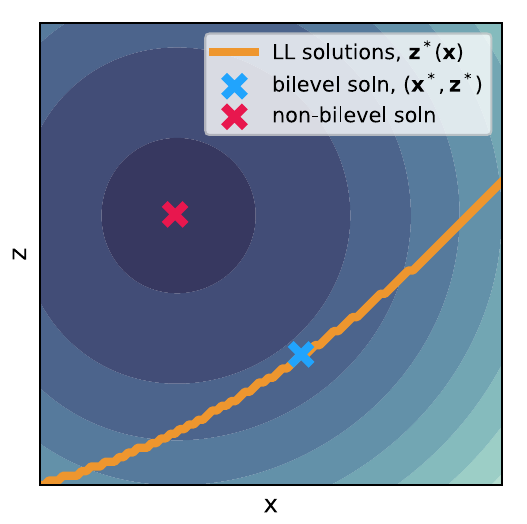}
         \caption{Upper-level objective}
     \end{subfigure}
     \hspace{0.2em}
     \begin{subfigure}[b]{0.23\textwidth}
        \centering
        \includegraphics[width=\textwidth]{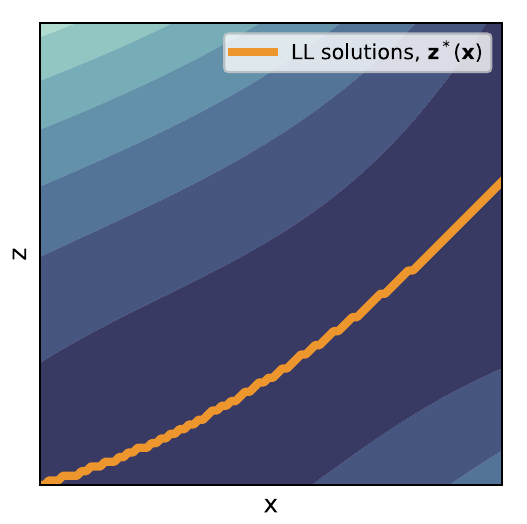}
        \caption{Lower-level objective}
     \end{subfigure}
    \caption{Example of bilevel optimization with upper-level variable $\mathbf{x}$ and lower-level variable $\mathbf{z}$. The bilevel solution (blue cross) is constrained by lower-level (LL) solutions (yellow line) and differs from the non-bilevel solution (red cross).}
    \label{fig:illu_bi}
\end{figure}

Classical approaches \citep{bard1982explicit, bard1990branch} rely on simplifying linear or convex assumptions. More recently, meta-modeling methods employ surrogate models, such as BLEAQ \citep{sinha2013efficient} which uses quadratic approximations to map upper-level points to lower-level solutions. We focus on meta-modeling methods over gradient-based ones to tackle general bilevel problems which may contain noisy observations, derivative-free functions at both levels, constraints and discrete variables. 

Bayesian optimization (BO), a popular meta-modeling method, has been employed in a nested framework for bilevel optimization \citep{kieffer2017bayesian}. The nested framework optimizes the upper level via BO while separately optimizing the lower level at each upper-level query point. However, these repeated lower-level optimizations are sample inefficient and often assume the presence of gradients. 

\textbf{Contributions.} We propose BILevel Bayesian Optimization (BILBO), a novel BO algorithm for general bilevel problems with blackbox functions and constraints, where both levels are optimized simultaneously unlike existing nested bilevel BO frameworks. This adds new complexities, such as the potential lower-level suboptimality of the query and the uncertainty of estimated lower-level solutions. To tackle these challenges, we introduce key components of BILBO:
\begin{itemize}
    \item Confidence-bounds based trusted sets to reduce search space with theoretical guarantees. In particular, sampling our trusted set of lower-level solutions bounds lower-level objective regret, effectively bounding lower-level suboptimality for upper-level optimization.

    \item Function query strategy that includes the uncertainty of the estimated lower-level solution and a conditional reassignment of the query point for more effective lower-level objective evaluation. This balances exploiting estimated lower-level solutions with exploring to refine these estimates. We show that our strategy leads to an instantaneous regret bound on the query.

\end{itemize}
We further show that BILBO has sublinear cumulative and simple regret bounds for commonly used kernels, and demonstrate its effectiveness empirically on several synthetic and real-world problems. The code is provided in \url{https://github.com/chewwt/bilbo/}, and a notation table is in \cref{sec:not}.

\section{Related Work}
\textbf{Bilevel Bayesian optimization.} Most existing bilevel Bayesian optimization methods, as mentioned in the introduction, apply BO only to the upper-level problem and rely on repeated lower-level optimizations at each upper-level query \citep{kieffer2017bayesian, islam2018efficient, wang2021comparing}. \citet{dogan2023bilevel} introduced an acquisition function that conditions on lower-level solutions during upper-level optimization for information flow between both levels. However, these nested methods require gradient information to estimate or refine lower-level solutions, making them unsuitable for derivative-free bilevel problems. \citet{fuconvergence} provided theoretical guarantees for a nested bilevel framework with stochastic gradient descent at the lower level and BO at the upper level. The reliance on lower-level gradients means the analysis does not extend to general derivative-free bilevel problems as well. In contrast, our proposed method is capable of handling blackbox, derivative-free bilevel problems, and our theoretical analysis is applicable to these settings.

An exception to the nested framework is a very recent parallel work by \citet{ekmekcioglu2024bayesian} on arXiv. However, it has no theoretical guarantees and cannot handle constraints, unlike our proposed algorithm.

\textbf{Constrained Bayesian optimization.} 
Several constrained BO algorithms have been proposed \citep{gardner2014bayesian, gelbart2014bayesian, hernandez2016general, eriksson2021scalable, takeno2022sequential}. In particular, \citet{xu2023constrained} and \citet{nguyen2023optimistic} both introduced confidence-bound based optimistic estimations of the feasible set, with the former providing an infeasibility declaration scheme and the latter including a function query strategy for decoupled settings. These feasible set estimations guide sampling toward probable feasible points, improving sample efficiency.~Compared to constrained optimization,~bilevel optimization presents additional challenges due to the need to optimize a separate lower-level problem, where the optimal solutions are unknown and often estimated suboptimally.

\textbf{Comparison to other optimization problems.} While some optimization problems, such as robust optimization \citep{bogunovic2018adversarially, kirschner2020distributionally} and composite objectives optimization \citep{astudillo2019bayesian, liregret}, respectively involve an additional random variable and composite objective function, they remain single-level problems. In contrast, bilevel optimization involves a two-level hierarchical structure, where the upper-level is constrained by lower-level solutions, making it fundamentally different from these other settings.

\section{Preliminaries}

\textbf{Bilevel optimization.} Let $F$ and $f$, respectively be the upper- and lower-level black-box objective function, where $F, f: \mathcal{X} \times \mathcal{Z} \rightarrow \mathbb{R}$. Let $\mathcal{C}_\text{up}$, $\mathcal{C}_\text{lo}$, respectively be the set of upper- and lower-level black-box constraints  where $c: \mathcal{X} \times \mathcal{Z} \rightarrow \mathbb{R}, \; \forall c \in \mathcal{C}_\text{up} \cup \mathcal{C}_\text{lo}$. The upper-level variable is denoted $\mathbf{x} \in \mathcal{X}$ and lower-level variable as $\mathbf{z} \in \mathcal{Z}$, where $\mathcal{X} \subset \mathbb{R}^{d_\mathcal{X}}$ and $\mathcal{Z} \subset \mathbb{R}^{d_\mathcal{Z}}$ are assumed to be finite. We consider a general bilevel optimization problem with constraints as
\begin{align}
    \max_{\mathbf{x} \in \mathcal{X}, \mathbf{z} \in \mathcal{P}(\mathbf{x}) } \quad & F(\mathbf{x}, \mathbf{z})\\
    \text{s.t.} \quad &C(\mathbf{x}, \mathbf{z}) \geq 0, \;\; \forall C \in \mathcal{C}_{\text{up}},
\end{align}
where $\mathcal{P}(\mathbf{x})$ is the set of optimal lower-level solutions at the upper-level variable $\mathbf{x}$,
\begin{align}
\mathcal{P}(\mathbf{x}) \triangleq \{{\arg \max}_{\mathbf{z} \in \mathcal{Z}} f(\mathbf{x}, \mathbf{z}) \mid c(\mathbf{x}, \mathbf{z}) \geq 0, \;\; \forall c \in \mathcal{C}_{\text{lo}}\}.
\end{align}

Let $(\mathbf{x}^*, \mathbf{z}^*)$ denote the optimal bilevel solution, and $(\mathbf{x}, \mathbf{z}^*(\mathbf{x}))$ denote the optimal lower-level solution w.r.t.~$\mathbf{x}$, where $\mathbf{z}^* \triangleq \mathbf{z}^*(\mathbf{x}^*)$. We define the set of functions $\mathcal{F} \triangleq \{F, f\} \cup \mathcal{C}_\text{up} \cup \mathcal{C}_\text{lo}$. At each step $t \geq 1$, we select a query point $(\mathbf{x}_t, \mathbf{z}_t)$ and obtain noisy observations $y_h(\mathbf{x}_t, \mathbf{z}_t) \triangleq h(\mathbf{x}_t, \mathbf{z}_t) + \epsilon$ where $\epsilon   \sim \mathcal{N}(0, \sigma^2_n)$, $\forall h \in \mathcal{F}$. In a decoupled setting, a function query $h_t \in \mathcal{F}$ is selected and only $y_{h_t}(\mathbf{x}_t, \mathbf{z}_t)$ is observed. Observations are accumulated into $\mathcal{D}_{h_t,t} \triangleq \mathcal{D}_{h_t,t-1} \cup \{ y_{h_t}(\mathbf{x}_t, \mathbf{z}_t)\}$ and $\mathcal{D}_{h, 0}$ is the set of initial observations for function $h$. $(\mathbf{x}_t, \mathbf{z}_t)$ is commonly used to estimate the optimal solution $(\mathbf{x}^*, \mathbf{z}^*)$.

\textbf{Gaussian process.} Each function $h \in \mathcal{F}$ is modelled with a Gaussian process (GP). Let $\mathbf{xz}$ be a concatenation of $\mathbf{x}$ and $\mathbf{z}$. A $\mathcal{GP}_h(m_h(\mathbf{xz}), k_h(\mathbf{xz}, \mathbf{xz}'))$ is specified by a mean function $m_h(\mathbf{xz}) \triangleq \mathbb{E}[h(\mathbf{xz})]$ and covariance function $k_h(\mathbf{xz}, \mathbf{xz}') \triangleq \mathbb{E}[(h(\mathbf{xz}) - m_h(\mathbf{xz}))(h(\mathbf{xz}') - m_h(\mathbf{xz}'))]$ \citep{williams2006gaussian}. At iteration $t$, given query inputs $\mathbf{xz}_{:t-1}$ and noisy observations $\mathbf{y}_{h,t-1}$, the predictive distribution for $h$ is Gaussian: $h(\mathbf{xz}) \; | \; \mathbf{xz}_{:t-1}, \mathbf{y}_{h,t-1} \sim \mathcal{N}(\mu_{h,t-1}(\mathbf{xz}), \sigma^2_{h, t-1}(\mathbf{xz}))$. More Gaussian process details can be found in \cref{sec:ap_gp}.

\textbf{Bayesian optimization.} Given a prior distribution $P(h)$ and likelihood function $P(\mathcal{D}_{h_t, t}|h)$, the posterior distribution $P(h | \mathcal{D}_{h_t, t})$ can be calculated via Bayes' theorem. The prior distribution is often represented by a GP and the likelihood function is defined by the choice of GP kernel and hyperparameters. The posterior distribution is also a surrogate model for $h$. The point which maximizes an acquisition function $a_h(\mathbf{xz})$ is selected as the next point to evaluate function $h$ at \citep{brochu2010tutorial, frazier2018tutorial, garnett2023bayesian}. Popular acquisition strategies include information-theoretic ones \citep{hennig2012entropy, hernandez2014predictive, wang2017max} and upper confidence bounds \citep{srinivas2009gaussian}.

\textbf{Regrets.} Regret is defined as the loss in reward from not selecting the optimal point. Instantaneous regret $r_t$ measures this loss at time $t$, while cumulative regret $R_T \triangleq \sum^T_{t=1} r_t$ is the sum of instantaneous regrets over $T$ rounds. An algorithm is no-regret if $\lim_{T \rightarrow \infty} R_T/T = 0$  where cumulative regret is sublinear and convergence to the optimal point is guaranteed with a large enough $T$. 

We define the instantaneous bilevel regret as
\begin{align} \label{eq:r_defn}
r_t &\triangleq \max_{h \in \mathcal{F}} r_h(\mathbf{x}_t, \mathbf{z}_t),
\end{align}
where $\mathcal{F} \triangleq \{F, f\} \cup \mathcal{C}_{\text{up}} \cup \mathcal{C}_\text{lo}$. The upper- and lower-level instantaneous objective regrets are defined, respectively, as 
\begin{align}
    r_F(\mathbf{x}_t, \mathbf{z}_t) &\triangleq \max(0, F(\mathbf{x}^*, \mathbf{z}^*) - F(\mathbf{x}_t, \mathbf{z}_t)), \label{eq:r_F}\\
    r_f(\mathbf{x}_t, \mathbf{z}_t) &\triangleq f(\mathbf{x}_t, \mathbf{z}^*(\mathbf{x}_t)) - f(\mathbf{x}_t, \mathbf{z}_t), \label{eq:r_f}
\end{align}
and the instantaneous constraint regrets as
\begin{align}
\label{eq:r_c}
    \forall c \in \mathcal{C}_\text{up} \cup \mathcal{C}_\text{lo}, \;\; r_c(\mathbf{x}_t, \mathbf{z}_t) &\triangleq \max(0, - c(\mathbf{x}_t, \mathbf{z}_t)). 
\end{align}
Note that $r_c$ is usually known as constraint violation and we have incorporated them into the bilevel regret for a more representative estimate of the optimality of a point. An algorithm that is no-regret will have all objective function regrets and constraint violations converge to $0$. If the constraints and objective functions have different ranges, normalization can ensure a fairer representation of the overall regret. 

\section{Bilevel Bayesian optimization}
Our method, called \textit{BILevel Bayesian Optimization} (BILBO), optimizes both upper- and lower-level simultaneously, via sampling from trusted sets and conditional reassignment to explore the lower-level objective. We avoid the repeated lower-level optimization found in most existing bilevel BO literature, as points in our trusted sets are sufficiently good for upper-level optimization directly. This is supported theoretically as we show that points in the trusted sets have upper-bounded instantaneous regrets on constraints and the lower-level objective. We also introduce a function query strategy based on estimated regrets, where we incorporate the uncertainty of estimated lower-level solutions and a conditional reassignment for exploration of the lower-level objective, to address challenges posed by the optimal lower-level solutions constraint. This results in an instantaneous regret bound on the query point, and leads to a sublinear cumulative and simple regret bound for commonly used kernels. The key components are illustrated in \cref{fig:illu_bilbo} and the pseudocode is in~\cref{alg:BILBO}. 

The trusted sets are defined in \cref{def:ts_s,,def:ts_p}, and \cref{lm:ts_s,lm:ts_p} provide instantaneous regret bounds on points in the trusted sets. \cref{def:ht} defines the function query selection and \cref{lm:ht} presents a instantaneous regret bound on the query. The cumulative regret bound is in \cref{t:blo_r} and the simple regret bound in \cref{lm:simple_r}. 

First, we define the confidence bounds on which we use to build the trusted sets. Functions are bounded by the confidence bounds with high probability by \cref{coro:cb}.

\begin{definition}[Confidence bounds]
\label{def:cb}
For a function $h \in \mathcal{F}$ modelled by a Gaussian process (GP), $\forall \mathbf{x} \in \mathcal{X}, \mathbf{z} \in \mathcal{Z}$, and $t \geq 1$, let the upper and lower confidence bounds of $h(\mathbf{x}, \mathbf{z})$ be denoted, respectively, as 
\begin{align}
    u_{h,t}(\mathbf{x}, \mathbf{z}) &\triangleq \mu_{h,t-1}(\mathbf{x}, \mathbf{z}) + \beta^{1/2}_t \sigma_{h,t-1}(\mathbf{x}, \mathbf{z}), \\
    l_{h,t}(\mathbf{x}, \mathbf{z}) &\triangleq \mu_{h, t-1}(\mathbf{x}, \mathbf{z}) - \beta^{1/2}_t \sigma_{h,t-1}(\mathbf{x}, \mathbf{z}),
\end{align}
where $\mu_{h,t-1}(\mathbf{x}, \mathbf{z})$ and $\sigma_{h,t-1}(\mathbf{x}, \mathbf{z})$ are the GP's posterior mean and standard deviation at $(\mathbf{x}, \mathbf{z})$, and $\beta_t \triangleq 2 \log(|\mathcal{F}||\mathcal{X}||\mathcal{Z}| t^2\pi^2 / (6\delta))$.
\end{definition}

\begin{corollary}
\label{coro:cb}
For some small $\delta > 0$, with probability at least $1 - \delta$, $\forall \mathbf{x} \in \mathcal{X}, \mathbf{z} \in \mathcal{Z}, h \in \mathcal{F}$, and $t \geq 1$,
\begin{equation*}
h(\mathbf{x}, \mathbf{z}) \in [l_{h,t}(\mathbf{x}, \mathbf{z}), u_{h,t}(\mathbf{x}, \mathbf{z})].
\end{equation*}
This is derived from Lemma 5.1 of \citet{srinivas2009gaussian} by applying union bound over $h \in \mathcal{F}$.
\end{corollary}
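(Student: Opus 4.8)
The plan is to follow the standard confidence-bound argument of \citet{srinivas2009gaussian} (their Lemma 5.1), augmenting the union bound with the extra factor $|\mathcal{F}|$ so that every function in $\mathcal{F}$ is covered simultaneously. The starting point is that, conditioned on the history, the GP posterior for each $h$ is Gaussian: $h(\mathbf{x},\mathbf{z}) \sim \mathcal{N}(\mu_{h,t-1}(\mathbf{x},\mathbf{z}), \sigma^2_{h,t-1}(\mathbf{x},\mathbf{z}))$. I would first fix a triple $(h,\mathbf{x},\mathbf{z})$ and a round $t$, and apply the standard Gaussian tail inequality: for $r \sim \mathcal{N}(0,1)$ and $c>0$, $\Pr(|r| > c) \leq e^{-c^2/2}$. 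Taking $c = \beta_t^{1/2}$ and rescaling gives
\begin{equation*}
\Pr\!\left(|h(\mathbf{x},\mathbf{z}) - \mu_{h,t-1}(\mathbf{x},\mathbf{z})| > \beta_t^{1/2}\sigma_{h,t-1}(\mathbf{x},\mathbf{z})\right) \leq e^{-\beta_t/2},
\end{equation*}
which says precisely that $h(\mathbf{x},\mathbf{z}) \notin [l_{h,t}(\mathbf{x},\mathbf{z}), u_{h,t}(\mathbf{x},\mathbf{z})]$ with probability at most $e^{-\beta_t/2}$.

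Next I would take a union bound over all sources of failure. Since $\mathcal{F}$, $\mathcal{X}$ and $\mathcal{Z}$ are all finite, at a fixed round $t$ the probability that the bound fails for \emph{some} $(h,\mathbf{x},\mathbf{z})$ is at most $|\mathcal{F}||\mathcal{X}||\mathcal{Z}|\, e^{-\beta_t/2}$. Summing over all rounds $t \geq 1$, the total failure probability is at most $\sum_{t\geq 1} |\mathcal{F}||\mathcal{X}||\mathcal{Z}|\, e^{-\beta_t/2}$.

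The one place requiring care — and the reason for the specific constant in $\beta_t$ — is ensuring this series sums to at most $\delta$. Substituting $\beta_t = 2\log(|\mathcal{F}||\mathcal{X}||\mathcal{Z}|\, t^2\pi^2/(6\delta))$ yields $e^{-\beta_t/2} = 6\delta/(|\mathcal{F}||\mathcal{X}||\mathcal{Z}|\, t^2\pi^2)$, so each term collapses to $6\delta/(\pi^2 t^2)$ and the cardinalities cancel exactly. The remaining sum is then $\frac{6\delta}{\pi^2}\sum_{t\geq 1} t^{-2} = \frac{6\delta}{\pi^2}\cdot\frac{\pi^2}{6} = \delta$ by the Basel identity. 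Taking complements delivers the claimed $1-\delta$ coverage uniformly over $\mathbf{x},\mathbf{z},h$ and $t$. I expect no genuine obstacle beyond correctly tracking the finite cardinalities; the argument is essentially bookkeeping around the known single-function result, with the $t^{-2}$ decay chosen precisely so that the infinite union over rounds remains summable.
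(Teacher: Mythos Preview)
Your proof is correct and follows exactly the approach the paper indicates: invoke the Gaussian tail bound from Lemma~5.1 of \citet{srinivas2009gaussian}, union bound over the finite sets $\mathcal{F}$, $\mathcal{X}$, $\mathcal{Z}$ and over rounds $t$, and use the Basel sum so the total failure probability is $\delta$. The paper's own justification is simply the one-line remark that the result follows from Srinivas's lemma via a union bound over $h \in \mathcal{F}$; you have spelled out that bookkeeping faithfully.
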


\begin{figure}[t]
    \centering
    \begin{subfigure}[t]{0.23\textwidth}
        \centering
        \includegraphics[width=\textwidth]{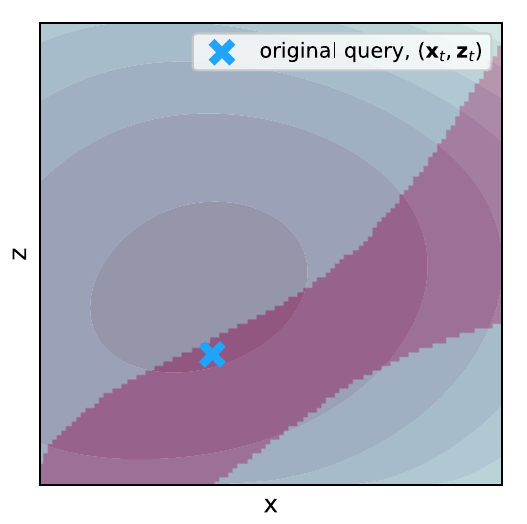}
        \caption{Query point selection}
        \label{fig:illu_query_uF}
    \end{subfigure}
    \hspace{0.2em}
    \begin{subfigure}[t]{0.23\textwidth}
        \centering
        \includegraphics[width=\textwidth]{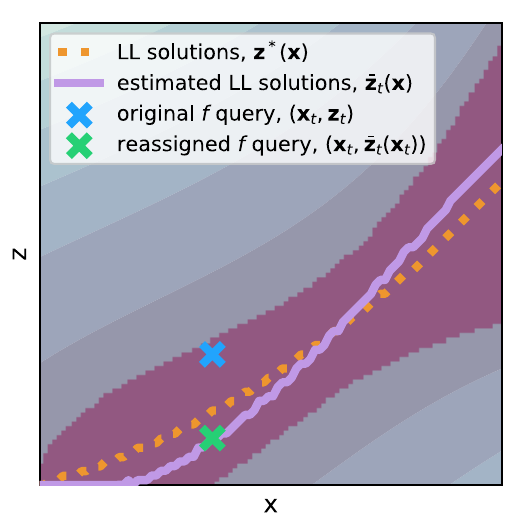}
        \caption{Conditional reassignment}
        \label{fig:illu_re}
     \end{subfigure}
     \caption{Key components of BILBO, where pink shaded areas represent trusted sets $\mathcal{S}^+_t \cap \mathcal{P}^+_t$. (a) Query point selection within trusted sets following \cref{eqn:query}. (b) Conditional reassignment following \cref{eq:new_zt}, where lower-level query $\mathbf{z}_t$ may be reassigned to estimated lower-level solution $\bar{\mathbf{z}}_t(\mathbf{x}_t)$ if lower-level objective is selected for query.}
    \label{fig:illu_bilbo}
\end{figure}

\begin{algorithm}[t]
\caption{BILBO}\label{alg:BILBO}
\begin{algorithmic}[1]
\REQUIRE $\mathcal{X}, \mathcal{Z}, \{\mathcal{D}_{h,0}\}_{h \in \mathcal{F}}$
\STATE Update GP posterior beliefs: $\{ (\mu_{h,0}, \sigma_{h,0}) \}_{h \in \mathcal{F}}$
\STATE Update trusted sets $\mathcal{S}_t^+$, $\mathcal{P}^+_t$ \hfill\COMMENT{Defs. \ref{def:ts_s} and \ref{def:ts_p}}
\FOR{$t \gets 1$ to $T$} 
    \IF{$\mathcal{S}_t^+ = \emptyset$}
        \STATE Declare infeasibility
    \ENDIF
    \STATE $\mathbf{x}_t, \mathbf{z}_t \gets \arg \max_{(\mathbf{x}, \mathbf{z}) \in \mathcal{S}^+_t \cap \mathcal{P}^+_t } u_{F, t}(\mathbf{x}, \mathbf{z})$ \hfill\COMMENT{Eq. \ref{eqn:query}}
    \STATE $h_t \gets \arg \max_{h \in \mathcal{F}} \bar{r}_{h,t}(\mathbf{x}_t, \mathbf{z}_t)$ \hfill\COMMENT{Def. \ref{def:ht}}
    \IF{$h_t = f$}
        \STATE $\bar{\mathbf{z}}_t \leftarrow \arg \max _{\mathbf{z} \in \mathcal{S}^+_{\text{lo},t}(\mathbf{x}_t)} u_{f, t} (\mathbf{x}_t, \mathbf{z})$ \hfill\COMMENT{Eq. \ref{eq:z_bar}}
        \IF{$\sigma_{f,t-1}(\mathbf{x}_t, \bar{\mathbf{z}}_t) > \sigma_{f,t-1}(\mathbf{x}_t, \mathbf{z}_t)$}
        \STATE $\mathbf{z}_t \leftarrow \bar{\mathbf{z}}_t$\label{alg:zbar_line} \hfill\COMMENT{Eq. \ref{eq:new_zt}}
    \ENDIF
    \ENDIF
    \STATE $\mathcal{D}_{h_t,t} \gets \mathcal{D}_{h_t,t-1} \cup \{ y_{h_t} (\mathbf{x}_t, \mathbf{z}_t) \}$
    \STATE Update GP posterior belief: $\mu_{h_t, t}, \sigma_{h_t, t}$
    \STATE Update trusted sets $\mathcal{S}_t^+, \mathcal{P}^+_t$ \hfill\COMMENT{Defs. \ref{def:ts_s} and \ref{def:ts_p}}
\ENDFOR
\end{algorithmic}
\end{algorithm}

Next, we introduce a trusted set of feasible solutions by approximating the unknown feasible regions using the upper confidence bound and show that points in this trusted set have constraint regret bounds.

\begin{definition}[Trusted set of feasible solutions]
\label{def:ts_s}
Let the trusted set of feasible solutions be defined as
\begin{equation}
    \mathcal{S}^+_t \triangleq \left\{ (\mathbf{x}, \mathbf{z}) \in \mathcal{X} \times \mathcal{Z} \mid u_{c,t}(\mathbf{x}, \mathbf{z}) \geq 0 \;\; \forall c \in \mathcal{C}_{\text{up}} \cup \mathcal{C}_{\text{lo}} \right\},
\end{equation}
where $\mathcal{C}_{\text{up}} \cup \mathcal{C}_{\text{lo}}$ is the set of all constraints, and the upper confidence bound $u_{c,t}$ is defined in \cref{def:cb}. For convenience, let $\mathcal{S}^+_t(\mathbf{x}) \triangleq \{\mathbf{z} \mid (\mathbf{x}, \mathbf{z}) \in \mathcal{S}^+_t\}$.
\end{definition}

\begin{lemma} 
\label{lm:ts_s}
 Let $\delta \in (0, 1)$, with probability at least $1 - \delta$, $\forall (\mathbf{x}, \mathbf{z}) \in \mathcal{S}^+_t, c \in \mathcal{C}_{\text{up}} \cup \mathcal{C}_{\text{lo}}$, the constraint regrets are upper bounded,
\begin{equation*}
    r_c(\mathbf{x}, \mathbf{z}) \leq 2 \beta^{1/2}_t \sigma_{c, t-1} (\mathbf{x}, \mathbf{z}).
\end{equation*}
\end{lemma}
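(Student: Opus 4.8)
The plan is to condition on the high-probability event supplied by \cref{coro:cb} and then combine the resulting lower confidence bound on each constraint with the defining inequality of the trusted set $\mathcal{S}^+_t$. By \cref{coro:cb}, with probability at least $1-\delta$ we have $c(\mathbf{x}, \mathbf{z}) \geq l_{c,t}(\mathbf{x}, \mathbf{z})$ simultaneously for all $(\mathbf{x},\mathbf{z})$, all $c \in \mathcal{C}_\text{up} \cup \mathcal{C}_\text{lo}$, and all $t \geq 1$. I would carry out the entire argument on this event, so that the stated probability $1-\delta$ is inherited directly and no further probabilistic reasoning is needed.

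The core is a two-case split on the sign of $c(\mathbf{x}, \mathbf{z})$. If $c(\mathbf{x}, \mathbf{z}) \geq 0$ then $r_c(\mathbf{x}, \mathbf{z}) = \max(0, -c(\mathbf{x},\mathbf{z})) = 0$, and the bound holds trivially because $\sigma_{c,t-1} \geq 0$. The substantive case is $c(\mathbf{x}, \mathbf{z}) < 0$, where $r_c(\mathbf{x},\mathbf{z}) = -c(\mathbf{x}, \mathbf{z})$. Here I would chain three facts: first, the confidence bound gives $-c(\mathbf{x}, \mathbf{z}) \leq -l_{c,t}(\mathbf{x}, \mathbf{z})$; second, the elementary identity $l_{c,t}(\mathbf{x}, \mathbf{z}) = u_{c,t}(\mathbf{x}, \mathbf{z}) - 2\beta_t^{1/2}\sigma_{c,t-1}(\mathbf{x}, \mathbf{z})$ that follows immediately from \cref{def:cb}; and third, membership $(\mathbf{x},\mathbf{z}) \in \mathcal{S}^+_t$, which by \cref{def:ts_s} guarantees $u_{c,t}(\mathbf{x}, \mathbf{z}) \geq 0$. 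Substituting the identity and discarding the nonnegative term $u_{c,t}(\mathbf{x}, \mathbf{z})$ yields
\begin{equation*}
-c(\mathbf{x}, \mathbf{z}) \leq -l_{c,t}(\mathbf{x}, \mathbf{z}) = -u_{c,t}(\mathbf{x}, \mathbf{z}) + 2\beta_t^{1/2}\sigma_{c,t-1}(\mathbf{x}, \mathbf{z}) \leq 2\beta_t^{1/2}\sigma_{c,t-1}(\mathbf{x}, \mathbf{z}),
\end{equation*}
which is exactly the claimed bound. Combining the two cases via the $\max$ in the definition of $r_c$ closes the argument.

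Honestly there is no real obstacle here: the result is essentially bookkeeping once the right ingredients are lined up, since the trusted set was defined precisely so that its optimistic constraint estimate is nonnegative. The only point demanding slight care is ensuring that the high-probability event is invoked once, uniformly over all constraints and all points, so that the single failure probability $\delta$ covers every inequality used; this is already what \cref{coro:cb} provides through its union bound over $h \in \mathcal{F}$. I would therefore state the conditioning explicitly at the outset and keep the remainder deterministic.
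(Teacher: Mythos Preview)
Your proposal is correct and follows essentially the same approach as the paper's proof: both use \cref{coro:cb} to bound $-c(\mathbf{x},\mathbf{z})$ by $-l_{c,t}(\mathbf{x},\mathbf{z})$, invoke membership in $\mathcal{S}^+_t$ to get $u_{c,t}(\mathbf{x},\mathbf{z}) \geq 0$, and apply the identity $u_{c,t}-l_{c,t}=2\beta_t^{1/2}\sigma_{c,t-1}$. The only cosmetic difference is that the paper keeps the $\max(0,\cdot)$ throughout a single chain of inequalities rather than doing an explicit case split on the sign of $c(\mathbf{x},\mathbf{z})$.
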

The proof is provided in \cref{sec:ap_s}. Sampling from $\mathcal{S}^+_t$ ensures that the instantaneous constraint regret of the chosen point is upper bounded, and highly infeasible points are outside the trusted set. An empty trusted feasible set would imply an infeasible bilevel problem, and our algorithm would make an infeasibility declaration.

Note the trusted feasible set extends the constraint regret bounds of constrained BO methods in \citet{xu2023constrained} and \citet{nguyen2023optimistic} to all points in the trusted set compared to only on the query point, so as to facilitate combination with the trusted set of optimal lower-level solutions for bilevel optimization with constraints.

\subsection{Trusted set of optimal lower-level solutions}
We define another trusted set to approximate the unknown set of optimal lower-level solutions, using confidence bounds and estimated optimal lower-level solutions. Points in this trusted set have lower-level objective regret bounds, allowing us to quantify possible lower-level suboptimality.

\begin{definition}[Trusted set of optimal lower-level solutions]
\label{def:ts_p}
Let the trusted set of optimal lower-level solutions be 
\begin{equation}
    \mathcal{P}^+_t \triangleq \{ (\mathbf{x}, \mathbf{z}) \in \mathcal{S}^+_{\text{lo},t} \mid u_{f,t} (\mathbf{x}, \mathbf{z}) \geq l_{f,t} (\mathbf{x}, \bar{\mathbf{z}}_t(\mathbf{x}) \},
\end{equation}
where $\mathcal{S}^+_{\text{lo},t} \triangleq \{ (\mathbf{x}, \mathbf{z}) \in \mathcal{X} \times \mathcal{Z} \mid u_{c,t}(\mathbf{x}, \mathbf{z}) \geq 0 \;\; \forall c \in \mathcal{C}_{\text{lo}} \}$ is the trusted set of feasible solutions w.r.t.~lower-level constraints, and 
\begin{equation}
\label{eq:z_bar}
\bar{\mathbf{z}}_t(\mathbf{x}) \triangleq \arg \max_{\mathbf{z} \in \mathcal{S}^+_{\text{lo}, t} (\mathbf{x})} u_{f, t}(\mathbf{x}, \mathbf{z}),
\end{equation}
is the estimated optimal lower-level solution at $\mathbf{x}$.
\end{definition}

\begin{lemma}
\label{lm:ts_p}
Let $\delta \in (0, 1)$, with probability at least $1 - \delta$, $\forall (\mathbf{x}, \mathbf{z}) \in \mathcal{P}^+_t$, the lower-level objective regret is upper bounded,
\begin{align*}
    r_{f, t}(\mathbf{x}, \mathbf{z}) \leq &\mathmybb{1}_{\mathbf{z} \neq \bar{\mathbf{z}}_t(\mathbf{x})} 2 \beta^{1/2}_t \sigma_{f, t-1}(\mathbf{x}, \bar{\mathbf{z}}_t(\mathbf{x})) \\&+ 2 \beta^{1/2}_t \sigma_{f,t-1}(\mathbf{x}, \mathbf{z}).
\end{align*}
\end{lemma}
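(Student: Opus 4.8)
The plan is to bound the lower-level objective regret $r_f(\mathbf{x},\mathbf{z}) = f(\mathbf{x},\mathbf{z}^*(\mathbf{x})) - f(\mathbf{x},\mathbf{z})$ by sandwiching the true values of $f$ between their confidence bounds and exploiting the two defining properties of $\bar{\mathbf{z}}_t(\mathbf{x})$ and $\mathcal{P}^+_t$. Throughout, I would condition on the high-probability event of \cref{coro:cb}, so that every appearance of $f$ (and of the lower-level constraints) can be replaced by its confidence interval; this is what carries the $1-\delta$ probability through to the final statement.

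The first key step is to establish that the true optimizer $\mathbf{z}^*(\mathbf{x})$ lies in the estimated feasible slice $\mathcal{S}^+_{\text{lo},t}(\mathbf{x})$. Since $\mathbf{z}^*(\mathbf{x})$ is by definition lower-level feasible, $c(\mathbf{x},\mathbf{z}^*(\mathbf{x})) \geq 0$ for every $c \in \mathcal{C}_{\text{lo}}$, and \cref{coro:cb} gives $u_{c,t}(\mathbf{x},\mathbf{z}^*(\mathbf{x})) \geq c(\mathbf{x},\mathbf{z}^*(\mathbf{x})) \geq 0$, so $\mathbf{z}^*(\mathbf{x}) \in \mathcal{S}^+_{\text{lo},t}(\mathbf{x})$. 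Because $\bar{\mathbf{z}}_t(\mathbf{x})$ maximizes $u_{f,t}(\mathbf{x},\cdot)$ over exactly this slice (\cref{eq:z_bar}), it follows that
\begin{equation*}
f(\mathbf{x},\mathbf{z}^*(\mathbf{x})) \leq u_{f,t}(\mathbf{x},\mathbf{z}^*(\mathbf{x})) \leq u_{f,t}(\mathbf{x},\bar{\mathbf{z}}_t(\mathbf{x})),
\end{equation*}
giving an upper bound on the optimal lower-level value purely in terms of the estimate $\bar{\mathbf{z}}_t(\mathbf{x})$.

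The remainder is a case split on whether $\mathbf{z}$ coincides with $\bar{\mathbf{z}}_t(\mathbf{x})$. When $\mathbf{z} = \bar{\mathbf{z}}_t(\mathbf{x})$ the indicator vanishes, and combining the display above with $f(\mathbf{x},\mathbf{z}) \geq l_{f,t}(\mathbf{x},\mathbf{z})$ yields $r_f(\mathbf{x},\mathbf{z}) \leq u_{f,t}(\mathbf{x},\bar{\mathbf{z}}_t(\mathbf{x})) - l_{f,t}(\mathbf{x},\bar{\mathbf{z}}_t(\mathbf{x})) = 2\beta^{1/2}_t\sigma_{f,t-1}(\mathbf{x},\mathbf{z})$, matching the surviving term. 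When $\mathbf{z} \neq \bar{\mathbf{z}}_t(\mathbf{x})$, I would again start from $r_f(\mathbf{x},\mathbf{z}) \leq u_{f,t}(\mathbf{x},\bar{\mathbf{z}}_t(\mathbf{x})) - l_{f,t}(\mathbf{x},\mathbf{z})$ and then split the right-hand side by adding and subtracting $l_{f,t}(\mathbf{x},\bar{\mathbf{z}}_t(\mathbf{x}))$: the piece $u_{f,t}(\mathbf{x},\bar{\mathbf{z}}_t(\mathbf{x})) - l_{f,t}(\mathbf{x},\bar{\mathbf{z}}_t(\mathbf{x}))$ equals $2\beta^{1/2}_t\sigma_{f,t-1}(\mathbf{x},\bar{\mathbf{z}}_t(\mathbf{x}))$, while the membership condition $u_{f,t}(\mathbf{x},\mathbf{z}) \geq l_{f,t}(\mathbf{x},\bar{\mathbf{z}}_t(\mathbf{x}))$ built into $\mathcal{P}^+_t$ bounds $l_{f,t}(\mathbf{x},\bar{\mathbf{z}}_t(\mathbf{x})) - l_{f,t}(\mathbf{x},\mathbf{z})$ by $u_{f,t}(\mathbf{x},\mathbf{z}) - l_{f,t}(\mathbf{x},\mathbf{z}) = 2\beta^{1/2}_t\sigma_{f,t-1}(\mathbf{x},\mathbf{z})$. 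Adding the two pieces reproduces both terms of the claimed bound.

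The step I expect to require the most care is recognizing that the defining inequality of $\mathcal{P}^+_t$ is precisely what controls $l_{f,t}(\mathbf{x},\bar{\mathbf{z}}_t(\mathbf{x})) - l_{f,t}(\mathbf{x},\mathbf{z})$: without it, $\mathbf{z}$ could have an arbitrarily small lower confidence bound and the regret would be unbounded. The telescoping choice of inserting $l_{f,t}(\mathbf{x},\bar{\mathbf{z}}_t(\mathbf{x}))$ (rather than, say, the true values) is what makes the two confidence-width terms appear cleanly; everything else is a direct application of \cref{coro:cb}.
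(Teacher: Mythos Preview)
Your proposal is correct and follows essentially the same route as the paper: establish $\mathbf{z}^*(\mathbf{x}) \in \mathcal{S}^+_{\text{lo},t}(\mathbf{x})$ to get $f(\mathbf{x},\mathbf{z}^*(\mathbf{x})) \leq u_{f,t}(\mathbf{x},\bar{\mathbf{z}}_t(\mathbf{x}))$, then case-split on $\mathbf{z} = \bar{\mathbf{z}}_t(\mathbf{x})$ and use the $\mathcal{P}^+_t$ membership inequality for the second case. The only cosmetic difference is that the paper telescopes through $u_{f,t}(\mathbf{x},\mathbf{z})$ rather than $l_{f,t}(\mathbf{x},\bar{\mathbf{z}}_t(\mathbf{x}))$, but the two decompositions are algebraically equivalent and use the membership condition in the same way.
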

The proof is given in \cref{sec:ap_p}. Sampling from $\mathcal{P}^+_t$ guarantees an upper-bounded lower-level objective regret, and points outside of the trusted set $\mathcal{P}^+_t$ are highly unlikely to be lower-level optimal.

The trusted set $\mathcal{P}^+_t$ allows multiple lower-level solutions to correspond to an upper-level variable, effectively managing multiple lower-level solutions and noisy observations. Moreover, we can handle infeasible lower-level problems as the trusted set $\mathcal{P}^+_t$ filters out highly probable infeasible points via the set $\mathcal{S}^+_{\text{lo},t}$, which can eliminate upper-level points that are infeasible at the lower-level, ensuring only probable feasible solutions are considered during optimization.

\textbf{$\epsilon$-optimal lower-level solutions}. In some scenarios, it may be desirable to consider $\epsilon$-optimal lower-level solutions feasible, as it is common for real-world agents to operate suboptimally. This allows us to account for practical limitations where perfect lower-level optimization may not be achievable, for example, due to noise or the cost of querying. In this case, we can relax the condition in \cref{def:ts_p} to allow $\epsilon$-optimal lower-level solutions to remain in the trusted set by defining $\mathcal{P}^{\epsilon}_t \triangleq \{ (\mathbf{x}, \mathbf{z}) \mid u_{f,t} (\mathbf{x}, \mathbf{z}) + \epsilon \geq l_{f,t} (\mathbf{x}, \bar{\mathbf{z}}_t(\mathbf{x}) \}$, and extending the regret bound in \cref{lm:ts_p} to $r_{f, t}(\mathbf{x}, \mathbf{z}) \leq \epsilon + \mathmybb{1}_{\mathbf{z} \neq \bar{\mathbf{z}}_t(\mathbf{x})} 2 \beta^{1/2}_t \sigma_{f, t-1}(\mathbf{x}, \bar{\mathbf{z}}_t(\mathbf{x})) + 2 \beta^{1/2}_t \sigma_{f,t-1}(\mathbf{x}, \mathbf{z})$.

\subsection{Query point selection}
We reduce the search space to $\mathcal{S}^+_t \cap \mathcal{P}^+_t$. Points in this search space have upper-bounded instantaneous regrets on constraints and lower-level objective with high probability, according to \cref{lm:ts_s,lm:ts_p}. The query point at time $t$ is sampled from the reduced search space and chosen w.r.t.~the upper confidence bound of the upper-level objective $u_{F,t}$, as shown in \cref{fig:illu_query_uF} and defined as,
\begin{align}
\label{eqn:query}
\mathbf{x}_t, \mathbf{z}_t \triangleq {\arg\max}_{(\mathbf{x}, \mathbf{z}) \in \mathcal{S}^+_t \cap \mathcal{P}^+_t} u_{F,t}(\mathbf{x}, \mathbf{z}).
\end{align}

\subsection{Function query}
In the decoupled case, a function query $h_t$ is selected at each timestep $t$ for evaluation. We follow the function query selection in \cref{def:ht}, and \cref{lm:ht} provides an instantaneous regret bound on the query $(\mathbf{x}_t, \mathbf{z}_t)$. 
\begin{definition}[Function query] 
\label{def:ht}
Let the function query $h_t$ selected at each timestep $t$ be
\begin{equation}
\label{eq:ht}
    h_t \triangleq \arg \max_{h \in \mathcal{F}} \bar{r}_{h, t}(\mathbf{x}_t, \mathbf{z}_t),
\end{equation}
where $\mathcal{F} \triangleq \{F, f\} \cup \mathcal{C}_\text{up} \cup \mathcal{C}_\text{lo}$. The estimated regrets are defined using confidence intervals and  $\bar{\mathbf{z}}_t$ from \cref{eq:z_bar},
\begin{align*}
    \bar{r}_{h',t} (\mathbf{x}_t, \mathbf{z}_t) \triangleq {}
    &2 \beta_t^{1/2} \sigma_{h', t-1} (\mathbf{x}_t, \mathbf{z}_t), \; \forall h' \in \mathcal{F} / \{f\},\\
    \bar{r}_{f,t} (\mathbf{x}_t, \mathbf{z}_t) \triangleq {}
    &\mathmybb{1}_{\mathbf{z} \neq \bar{\mathbf{z}}_t(\mathbf{x}_t)} 2 \beta^{1/2}_t \sigma_{f,t-1}(\mathbf{x}_t, \bar{\mathbf{z}}_t(\mathbf{x}_t))\\
    &+ 2 \beta^{1/2}_t \sigma_{f, t-1}(\mathbf{x}_t, \mathbf{z}_t).
\end{align*}
\end{definition}

The estimated lower-level objective regret $\bar{r}_{f, t}(\mathbf{x}_t, \mathbf{z}_t)$ has an additional term,  $\sigma_{f,t-1}(\mathbf{x}_t, \bar{\mathbf{z}}_t(\mathbf{x}_t))$, compared to other regret components. This additional term increases $\bar{r}_{f, t}$ when the estimated lower-level solution is highly uncertain, corresponding to the intuitive need for more frequent queries of the lower-level objective, especially since most existing bilevel BO methods rely on global optimization of the lower level.

\textbf{Conditional reassignment of $\mathbf{z}_t$ for lower-level objective query.} When the lower-level objective function $f$ is selected for query, we want to reduce lower-level objective regret effectively. Thus, as in \cref{fig:illu_re}, the lower-level variable to query, $\mathbf{z}_t$, has to be reassigned on the following condition, 
\begin{align}
\label{eq:new_zt}
    &\text{If} \; h_t = f \; \text{and} \; \sigma_{f, t-1}(\mathbf{x}_t, \bar{\mathbf{z}}_t(\mathbf{x}_t)) \geq \sigma_{f,t-1}(\mathbf{x}_t, \mathbf{z}_t), \nonumber\\
    &\mathbf{z}_t \leftarrow \bar{\mathbf{z}}_t(\mathbf{x}_t).
\end{align}
Without reassignment, $f$ will only be queried at $(\mathbf{x}_t, \mathbf{z}_t)$ and the term $\sigma_{f,t-1}(\mathbf{x}_t, \bar{\mathbf{z}}_t(\mathbf{x}_t))$ would remain large even after repeated queries to $f$. This reassignment encourages exploration of the lower-level objective, replacing the repeated lower-level optimization required by existing methods.

\begin{lemma}
\label{lm:ht}
Let $\delta \in (0, 1)$, with probability at least $1 - \delta$, following the function query selection in \cref{def:ht} and reassignment of query point in \cref{eq:new_zt}, the instantaneous regret for the query point $(\mathbf{x}_t, \mathbf{z}_t)$ at time $t \geq 1$ is upper bounded by,
\begin{equation*}
    r_t \leq 4 \beta_t^{1/2} \max_{h \in \mathcal{F}} \sigma_{h, t-1}(\mathbf{x}_t, \mathbf{z}_t).
\end{equation*}    
\end{lemma}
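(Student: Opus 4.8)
The plan is to control each of the $|\mathcal{F}|$ instantaneous regret components by its estimate $\bar r_{h,t}$, so that the maximum defining $r_t$ collapses onto the selected function $h_t$, and then to convert $\bar r_{h_t,t}$ into the advertised $4\beta_t^{1/2}\max_{h}\sigma_{h,t-1}$ bound using the conditional reassignment. The whole argument would be run on the event of \cref{coro:cb} that every $h\in\mathcal{F}$ stays within its confidence band; this event has probability at least $1-\delta$, matching the statement, and everything downstream of it is deterministic.

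For the constraint and lower-level components I would quote the trusted-set lemmas directly: the selected point $(\mathbf{x}_t,\mathbf{z}_t)$ of \cref{eqn:query} lies in $\mathcal{S}^+_t\cap\mathcal{P}^+_t$, so \cref{lm:ts_s} gives $r_c(\mathbf{x}_t,\mathbf{z}_t)\le 2\beta_t^{1/2}\sigma_{c,t-1}(\mathbf{x}_t,\mathbf{z}_t)=\bar r_{c,t}(\mathbf{x}_t,\mathbf{z}_t)$ for each $c$, and \cref{lm:ts_p} gives $r_f(\mathbf{x}_t,\mathbf{z}_t)\le\bar r_{f,t}(\mathbf{x}_t,\mathbf{z}_t)$. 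The upper-level objective is the only component not handled by a trusted-set lemma, and for it I would use optimism. The key supporting fact is that the bilevel optimum $(\mathbf{x}^*,\mathbf{z}^*)$ itself lies in $\mathcal{S}^+_t\cap\mathcal{P}^+_t$: membership in $\mathcal{S}^+_t$ and in $\mathcal{S}^+_{\text{lo},t}$ follows since feasibility together with \cref{coro:cb} gives $u_{c,t}(\mathbf{x}^*,\mathbf{z}^*)\ge c(\mathbf{x}^*,\mathbf{z}^*)\ge0$, and the defining inequality of $\mathcal{P}^+_t$ reduces to $l_{f,t}(\mathbf{x}^*,\bar{\mathbf{z}}_t(\mathbf{x}^*))\le u_{f,t}(\mathbf{x}^*,\mathbf{z}^*)$, which I would obtain from $l_{f,t}(\mathbf{x}^*,\bar{\mathbf{z}}_t(\mathbf{x}^*))\le f(\mathbf{x}^*,\bar{\mathbf{z}}_t(\mathbf{x}^*))\le f(\mathbf{x}^*,\mathbf{z}^*)\le u_{f,t}(\mathbf{x}^*,\mathbf{z}^*)$, the middle inequality being the lower-level optimality of $\mathbf{z}^*$ at $\mathbf{x}^*$. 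Given membership, the query rule \cref{eqn:query} gives $u_{F,t}(\mathbf{x}_t,\mathbf{z}_t)\ge u_{F,t}(\mathbf{x}^*,\mathbf{z}^*)\ge F(\mathbf{x}^*,\mathbf{z}^*)$, and pairing this with $F(\mathbf{x}_t,\mathbf{z}_t)\ge l_{F,t}(\mathbf{x}_t,\mathbf{z}_t)$ yields $r_F(\mathbf{x}_t,\mathbf{z}_t)\le2\beta_t^{1/2}\sigma_{F,t-1}(\mathbf{x}_t,\mathbf{z}_t)=\bar r_{F,t}(\mathbf{x}_t,\mathbf{z}_t)$. Now $r_h\le\bar r_{h,t}$ holds for every $h$, so by the selection rule \cref{def:ht}, $r_t=\max_h r_h(\mathbf{x}_t,\mathbf{z}_t)\le\max_h\bar r_{h,t}(\mathbf{x}_t,\mathbf{z}_t)=\bar r_{h_t,t}(\mathbf{x}_t,\mathbf{z}_t)$.

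It remains to bound $\bar r_{h_t,t}$ by $4\beta_t^{1/2}\max_h\sigma_{h,t-1}$ at the queried location, splitting on $h_t$. If $h_t\neq f$, then $\bar r_{h_t,t}=2\beta_t^{1/2}\sigma_{h_t,t-1}(\mathbf{x}_t,\mathbf{z}_t)\le2\beta_t^{1/2}\max_h\sigma_{h,t-1}(\mathbf{x}_t,\mathbf{z}_t)$ and the factor $2$ is absorbed. If $h_t=f$, the estimate $\bar r_{f,t}$ carries the extra term $2\beta_t^{1/2}\sigma_{f,t-1}(\mathbf{x}_t,\bar{\mathbf{z}}_t(\mathbf{x}_t))$, and this is exactly where the reassignment \cref{eq:new_zt} enters. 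When the reassignment fires, the queried $\mathbf{z}_t$ is set to $\bar{\mathbf{z}}_t(\mathbf{x}_t)$, so $\sigma_{f,t-1}(\mathbf{x}_t,\bar{\mathbf{z}}_t(\mathbf{x}_t))=\sigma_{f,t-1}(\mathbf{x}_t,\mathbf{z}_t)$ and the indicator vanishes; when it does not fire, its triggering test failed, i.e.\ $\sigma_{f,t-1}(\mathbf{x}_t,\bar{\mathbf{z}}_t(\mathbf{x}_t))\le\sigma_{f,t-1}(\mathbf{x}_t,\mathbf{z}_t)$. In either case both summands of $\bar r_{f,t}$ are bounded by $2\beta_t^{1/2}\sigma_{f,t-1}(\mathbf{x}_t,\mathbf{z}_t)$ at the queried point, so their sum is at most $4\beta_t^{1/2}\sigma_{f,t-1}(\mathbf{x}_t,\mathbf{z}_t)\le4\beta_t^{1/2}\max_h\sigma_{h,t-1}(\mathbf{x}_t,\mathbf{z}_t)$, which closes the bound.

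The step I expect to be the crux is this reconciliation of the two-term lower-level estimate with the reassignment, since it is where the factor of $4$ (rather than $2$) is genuinely forced and where the pre- and post-reassignment values of $\mathbf{z}_t$ must be tracked with care: the per-component regrets, the choice of $h_t$, and the reassignment are not all evaluated at the same $\mathbf{z}_t$, so I would have to verify that after a reassignment the regret of the committed point is still dominated by $\sigma_{f,t-1}$ at the actually queried point, and that the queried point continues to respect the set memberships used in the component bounds. A second delicate point, worth isolating as a sublemma, is the membership $(\mathbf{x}^*,\mathbf{z}^*)\in\mathcal{P}^+_t$ when lower-level constraints are present, where one must ensure the estimated optimum $\bar{\mathbf{z}}_t(\mathbf{x}^*)$, taken over the optimistic feasible set $\mathcal{S}^+_{\text{lo},t}(\mathbf{x}^*)$, cannot drive $l_{f,t}(\mathbf{x}^*,\bar{\mathbf{z}}_t(\mathbf{x}^*))$ above $u_{f,t}(\mathbf{x}^*,\mathbf{z}^*)$ through an infeasible point.
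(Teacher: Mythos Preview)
Your proposal is essentially the paper's proof. The paper packages the same steps into three sublemmas: $(\mathbf{x}^*,\mathbf{z}^*)\in\mathcal{S}^+_t\cap\mathcal{P}^+_t$ (your ``second delicate point''), the optimism bound $r_F\le 2\beta_t^{1/2}\sigma_{F,t-1}$, and $r_t\le\bar r_{h_t,t}$; it then splits on $h_t$ and uses the reassignment exactly as you do, writing $\bar r_{f,t}\le 4\beta_t^{1/2}\max(\sigma_{f,t-1}(\mathbf{x}_t,\bar{\mathbf{z}}_t),\sigma_{f,t-1}(\mathbf{x}_t,\mathbf{z}_t))=4\beta_t^{1/2}\sigma_{f,t-1}(\mathbf{x}_t,\mathbf{z}_t)$ with the last equality justified by \cref{eq:new_zt}. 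Your worry about tracking pre- versus post-reassignment $\mathbf{z}_t$ is apt: the paper's reading is that $r_t$ and $\bar r_{h_t,t}$ are evaluated at the original $\mathbf{z}_t$, while the final $\sigma_{h_t,t-1}(\mathbf{x}_t,\mathbf{z}_t)$ on the right is at the (possibly reassigned) queried point, which is precisely what the downstream information-gain argument in \cref{t:blo_r} needs; the paper does not separately re-verify trusted-set membership for $(\mathbf{x}_t,\bar{\mathbf{z}}_t(\mathbf{x}_t))$.
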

The proof is in \cref{sec:ap_ht_reg}. By \cref{lm:r_r_bar}, we also see $\max_{h \in \mathcal{F}} \bar{r}_{h,t}(\mathbf{x}_t, \mathbf{z}_t) \geq r_t$. Thus, $\max_{h \in \mathcal{F}} \bar{r}_{h,t}(\mathbf{x}_t, \mathbf{z}_t)$ is the upper regret bound at query point $(\mathbf{x}_t, \mathbf{z}_t)$, where a large estimated regret $\bar{r}_{h,t}(\mathbf{x}_t, \mathbf{z}_t)$ suggests that function $h$ affects $r_t$ significantly. Since $\bar{r}_{h,t}$ comprises of $\sigma_{h,t-1}$, selecting the $\arg\max_{h \in \mathcal{F}} \bar{r}_{h,t}$ in \cref{def:ht} can also be seen as selecting the most uncertain function at $(\mathbf{x}_t, \mathbf{z}_t)$ to query. 

\subsection{Regret bound}
The cumulative regret bound of \cref{alg:BILBO} is shown in \cref{t:blo_r} and proven in \cref{sec:ap-blo-reg} using \cref{lm:ht}.

\begin{theorem}\label{t:blo_r}
Let $\delta \in (0,1)$ and $\beta_t \triangleq 2 \log(|\mathcal{F}||\mathcal{X}||\mathcal{Z}| t^2\pi^2 / 6\delta)$. With probability of at least $1 - \delta$, \cref{alg:BILBO} has a cumulative regret bound of
\begin{align*}
    R_T \leq \sqrt{4T|\mathcal{F}| \beta_T \max_{h \in \mathcal{F}} C_h \gamma_{h,T}},
\end{align*}
where $C_h \triangleq 8 / \log(1+\sigma^{-2}_h)$, and $\gamma_{h,T}$ is the maximum information gain from noisy observations of $h$ at $(\mathbf{x}_t, \mathbf{z}_t), \; \forall t \in [T]$.
\end{theorem}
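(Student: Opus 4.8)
The plan is to turn the instantaneous bilevel regret at each round into a single posterior standard deviation of the \emph{queried} function at the \emph{actually-evaluated} point, sum over $t$, and then convert the sum into maximum information gain via Cauchy--Schwarz. The first ingredient is the sharper per-step inequality that underlies \cref{lm:ht}: since $h_t=\arg\max_{h\in\mathcal F}\bar r_{h,t}(\mathbf x_t,\mathbf z_t)$ and $\max_{h\in\mathcal F}\bar r_{h,t}(\mathbf x_t,\mathbf z_t)\ge r_t$ by \cref{lm:r_r_bar}, we have $r_t\le\bar r_{h_t,t}(\mathbf x_t,\mathbf z_t)$. I would use this form rather than the looser $4\beta_t^{1/2}\max_h\sigma_{h,t-1}$ of \cref{lm:ht}, because matching the stated constant $4$ (rather than a larger one) requires controlling regret directly by the queried function's uncertainty.

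The key step is to show $\bar r_{h_t,t}(\mathbf x_t,\mathbf z_t)\le 4\beta_t^{1/2}\sigma_{h_t,t-1}(\tilde{\mathbf x}_t,\tilde{\mathbf z}_t)$, where $(\tilde{\mathbf x}_t,\tilde{\mathbf z}_t)$ is the point at which $h_t$ is evaluated after the possible reassignment of \cref{eq:new_zt}. When $h_t\ne f$ this is immediate from \cref{def:ht}, since $\bar r_{h_t,t}=2\beta_t^{1/2}\sigma_{h_t,t-1}(\mathbf x_t,\mathbf z_t)$ and no reassignment occurs. When $h_t=f$ the estimated regret carries the extra term $\mathmybb{1}_{\mathbf z_t\ne\bar{\mathbf z}_t(\mathbf x_t)}2\beta_t^{1/2}\sigma_{f,t-1}(\mathbf x_t,\bar{\mathbf z}_t(\mathbf x_t))$, and here the reassignment is exactly what is needed: if it triggers, the evaluation point is $\bar{\mathbf z}_t(\mathbf x_t)$ and $\sigma_{f,t-1}(\mathbf x_t,\bar{\mathbf z}_t(\mathbf x_t))\ge\sigma_{f,t-1}(\mathbf x_t,\mathbf z_t)$, so both terms are bounded by $\sigma_{f,t-1}$ at that evaluation point, collapsing $\bar r_{f,t}$ to at most $4\beta_t^{1/2}\sigma_{f,t-1}(\tilde{\mathbf x}_t,\tilde{\mathbf z}_t)$; if it does not trigger, then $\sigma_{f,t-1}(\mathbf x_t,\bar{\mathbf z}_t(\mathbf x_t))<\sigma_{f,t-1}(\mathbf x_t,\mathbf z_t)$ and the same bound holds at the unchanged point $\mathbf z_t$. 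Hence in every case $r_t\le 4\beta_t^{1/2}\sigma_{h_t,t-1}(\tilde{\mathbf x}_t,\tilde{\mathbf z}_t)$.

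With this per-step bound I would use $\beta_t\le\beta_T$ and Cauchy--Schwarz to get $R_T^2\le 16\beta_T\,T\sum_{t=1}^T\sigma^2_{h_t,t-1}(\tilde{\mathbf x}_t,\tilde{\mathbf z}_t)$. The decoupled setting is handled by splitting the sum according to which function is queried: for each $h\in\mathcal F$ the rounds with $h_t=h$ are precisely the evaluations feeding that GP, so the standard information-gain estimate (Lemmas 5.3 and 5.4 of \citet{srinivas2009gaussian}) gives $\sum_{t:h_t=h}\sigma^2_{h,t-1}(\tilde{\mathbf x}_t,\tilde{\mathbf z}_t)\le \frac{2}{\log(1+\sigma_h^{-2})}\gamma_{h,T}$. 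Bounding the sum over the $|\mathcal F|$ functions by $|\mathcal F|\max_h$ and recalling $C_h=8/\log(1+\sigma_h^{-2})$ collects the constants into $R_T^2\le 4T|\mathcal F|\beta_T\max_h C_h\gamma_{h,T}$, which is the claim after taking square roots.

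The main obstacle is the second step: making the reassignment pay off, i.e.\ verifying that the two-term estimated lower-level regret $\bar r_{f,t}$ is always dominated by a single posterior-std term evaluated at the point actually queried. This is exactly why the reassignment compares $\sigma_{f,t-1}$ at $\bar{\mathbf z}_t(\mathbf x_t)$ against $\mathbf z_t$ and queries the larger-variance location. A secondary point requiring care is the decoupled bookkeeping: the factor $|\mathcal F|$ appears only because each GP's variances shrink solely on its own query rounds, so the information-gain bound must be applied per function and then summed, rather than invoked once globally.
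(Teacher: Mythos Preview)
Your proposal is correct and follows essentially the same route as the paper. In particular, the key inequality $r_t\le 4\beta_t^{1/2}\sigma_{h_t,t-1}$ at the actually-queried point is precisely what the paper establishes inside the proof of \cref{lm:ht} (before relaxing to the $\max_h$ form in the lemma statement), and the remainder---Cauchy--Schwarz, splitting by queried function, applying the information-gain bound per function, and collecting into $|\mathcal F|\max_h C_h\gamma_{h,T}$---matches the paper's argument step for step with the same constants.
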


The regret bound is related to the maximum information gain across all functions in $\mathcal{F}$. Our regret bound has a larger constant than the regret bound for constrained Bayesian optimization in \citet{nguyen2023optimistic}, as the lower-level objective regret has a larger upper bound than constraint regret. This highlights the increased difficulties of optimizing bilevel problems, where suboptimal lower-level solutions can hinder upper-level optimization, making bilevel optimization more challenging than standard constrained optimization. 

The cumulative regret bound of BILBO is sublinear as $\gamma_{h,T}$ is sublinear for common kernels including Squared Exponential and Mat\'ern kernels \citep{srinivas2009gaussian}. The sublinear cumulative regret guarantees convergence to the optimal solution as $R_T/T \rightarrow 0$ as $T \rightarrow \infty$.

\cref{lm:simple_r} gives a simple regret bound  for the estimator,
\begin{align}
\label{eq:esti}
    \hat{\mathbf{x}}_T, \hat{\mathbf{z}}_T \triangleq \arg \min_{(\mathbf{x}_t, \mathbf{z}_t) \in \{(\mathbf{x}_{t'}, \mathbf{z}_{t'})\}_{t' \in [T]}} \max_{h \in \mathcal{F}} \bar{r}_{h,t} (\mathbf{x}_t, \mathbf{z}_t).
\end{align}

\begin{lemma}
\label{lm:simple_r}
Let $\delta \in (0, 1)$, with probability at least $1 - \delta$, $T \geq 1$, and $\beta_T \triangleq 2 \log(|\mathcal{F}||\mathcal{X}||\mathcal{Z}| T^2\pi^2 / 6\delta)$, the estimator $(\hat{\mathbf{x}}_T, \hat{\mathbf{z}}_T)$, defined in \cref{eq:esti}, has a simple regret bound of
\begin{align*}
    r_T \leq \sqrt{4 |\mathcal{F}| \beta_T \max_{h \in \mathcal{F}} C_h \gamma_{{h, T}} / T}.
\end{align*}
\end{lemma}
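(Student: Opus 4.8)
The plan is to convert the cumulative regret bound of \cref{t:blo_r} into a simple regret bound for the hindsight estimator $(\hat{\mathbf{x}}_T,\hat{\mathbf{z}}_T)$, using the standard observation that the minimum over a sequence is at most its average, combined with the fact that the observable estimated regret dominates the true bilevel regret. Throughout, everything is conditioned on the same $1-\delta$ event of \cref{coro:cb} under which \cref{lm:ht,t:blo_r} were proved, so the probability bookkeeping carries over unchanged.

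First I would assemble three ingredients. (i) By \cref{lm:r_r_bar}, the true instantaneous bilevel regret at any queried point is dominated by its maximal estimated regret; in particular, writing $\hat{t}$ for the timestep attaining the $\arg\min$ in \cref{eq:esti}, the simple regret $r_T = r(\hat{\mathbf{x}}_T,\hat{\mathbf{z}}_T)$ satisfies $r_T \leq \max_{h\in\mathcal{F}}\bar{r}_{h,\hat{t}}(\hat{\mathbf{x}}_T,\hat{\mathbf{z}}_T)$. (ii) By the defining $\arg\min$ in \cref{eq:esti}, this maximal estimated regret equals $\min_{t\in[T]}\max_{h\in\mathcal{F}}\bar{r}_{h,t}(\mathbf{x}_t,\mathbf{z}_t)$, which is at most the average $\tfrac{1}{T}\sum_{t=1}^T\max_{h\in\mathcal{F}}\bar{r}_{h,t}(\mathbf{x}_t,\mathbf{z}_t)$. (iii) The proof of \cref{t:blo_r} controls the summed estimated regrets $\sum_{t=1}^T \max_{h\in\mathcal{F}}\bar{r}_{h,t}(\mathbf{x}_t,\mathbf{z}_t)$ by $B_T \triangleq \sqrt{4T|\mathcal{F}|\beta_T\max_{h\in\mathcal{F}}C_h\gamma_{h,T}}$ via Cauchy--Schwarz and the maximum-information-gain bound; this is consistent with $R_T \leq B_T$ because $\max_{h}\bar{r}_{h,t}\geq r_t$.

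Chaining (i)--(iii) gives $r_T \leq \tfrac{1}{T}\sum_{t=1}^T\max_{h\in\mathcal{F}}\bar{r}_{h,t}(\mathbf{x}_t,\mathbf{z}_t) \leq B_T/T$, and the one-line simplification $B_T/T = \sqrt{4|\mathcal{F}|\beta_T\max_{h\in\mathcal{F}} C_h\gamma_{h,T}/T}$ yields the claimed bound.

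The main obstacle, and the only place requiring care, is steps (i)--(ii): the estimator minimizes the observable surrogate $\max_{h}\bar{r}_{h,t}$ rather than the true bilevel regret $r_t$, which is inaccessible since it depends on the unknown optima $(\mathbf{x}^*,\mathbf{z}^*)$ and $\mathbf{z}^*(\mathbf{x})$. The argument is valid only because \cref{lm:r_r_bar} certifies that this surrogate upper bounds the true regret, so that minimizing the surrogate still controls $r_T$. I would therefore make explicit that \cref{lm:r_r_bar} applies at the estimator point and not merely at the query point treated in \cref{lm:ht}, and confirm that what the proof of \cref{t:blo_r} actually bounds is the summed surrogate $\sum_t \max_{h}\bar{r}_{h,t}$ rather than only $R_T=\sum_t r_t$, since it is precisely this summed surrogate that step (ii) averages.
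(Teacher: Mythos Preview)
Your proposal is correct and mirrors the paper's own proof essentially step for step: the paper also invokes \cref{lm:r_r_bar} together with the definition \cref{eq:esti} to get $r_T \leq \min_t \max_h \bar{r}_{h,t}$, applies the min-$\leq$-average bound, then bounds each summand by $4\beta_t^{1/2}\max_h \sigma_{h,t-1}$ via the argument in \cref{sec:ap_ht_reg}, and finishes with the Cauchy--Schwarz/information-gain computation from \cref{sec:ap-blo-reg}. Your closing caveats are exactly the right things to check, and both hold in the paper as written.
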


This follows as the simple regret of $(\hat{\mathbf{x}}_T, \hat{\mathbf{z}}_T)$ is upper bounded by the average regret bound in \cref{lm:ht} across timesteps. The detailed proof is in \cref{sec:ap_simple_r}.

\section{Experiments}

We evaluate the performance of BILBO on 4 synthetic and 2 real-world problems. We introduce 2 baselines for comparison: ``TrustedRand'' and ``Nested''. TrustedRand randomly samples query points from trusted sets to assess the trusted set's contribution to the overall performance of BILBO. Nested optimizes the upper- and lower-level problems separately and serves as a baseline for nested BO approaches like in \citet{kieffer2017bayesian}, approximating lower-level gradients. More details on TrustedRand and Nested are in \cref{sec:ap_exp}. Note Nested cannot handle constraints and is not compared in experiments with constraints (SMD12 and Chemical). The very recent parallel work by \citet{ekmekcioglu2024bayesian} is not compared as code is not provided.

Algorithms are implemented using GpyTorch \citep{gardner2018gpytorch}. All experiments, except Nested, are initialized with 3 observations on each function. Nested requires more initial observations of lower-level functions as the upper-level objective is only evaluated at the estimated lower-level solution. We allow for this to enable comparisons, which also highlights the sample inefficiency of nested methods. All observations are noisy with $\sigma_n = 0.01$, and outputs are normalized to have mean $0$ and standard deviation $1$. We discretize the search space using a uniformly-spaced grid to facilitate representation of trusted sets. BILBO queries only one function per iteration, while TrustedRand queries all function at each iteration. For comparison, the estimator is chosen as $\arg \max_{(\mathbf{x}, \mathbf{z}) \in \mathcal{S}^+_t \cap \mathcal{P}^+_t} \mu_{F,t}(\mathbf{x}, \mathbf{z})$ for BILBO and TrustedRand, and $\arg \max_{\mathbf{x} \in \mathcal{X}} \mu_{F, t}(\mathbf{x})$ for Nested. Additional implementation details are in \cref{sec:exp_details}.

Results are averaged over 5 runs and performance is compared by examining the instantaneous regret against query count with 95\% confidence intervals. The instantaneous regret in this section is calculated as the sum of each function's instantaneous regret ($\sum_{h\in\mathcal{F}} r_{h,t}$) to provide intuitive comparison across different methods. Initial observations are included in the number of queries, indicated by a small gap in the regret plots before estimations begin.

\begin{figure*}[ht]
    \centering
     \begin{subfigure}[b]{0.45\textwidth}
         \centering
         \includegraphics[width=\textwidth]{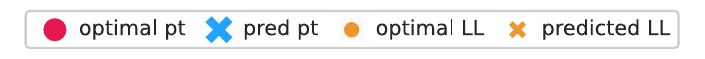}
     \end{subfigure}
     \begin{subfigure}[b]{0.45\textwidth}
         \centering
         \includegraphics[width=\textwidth]{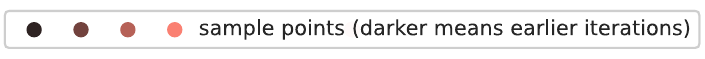}
    \end{subfigure}
    \vspace{-0.3em}
    \vfill
    \begin{subfigure}[b]{0.16\textwidth}
        \centering
        \includegraphics[width=\textwidth]{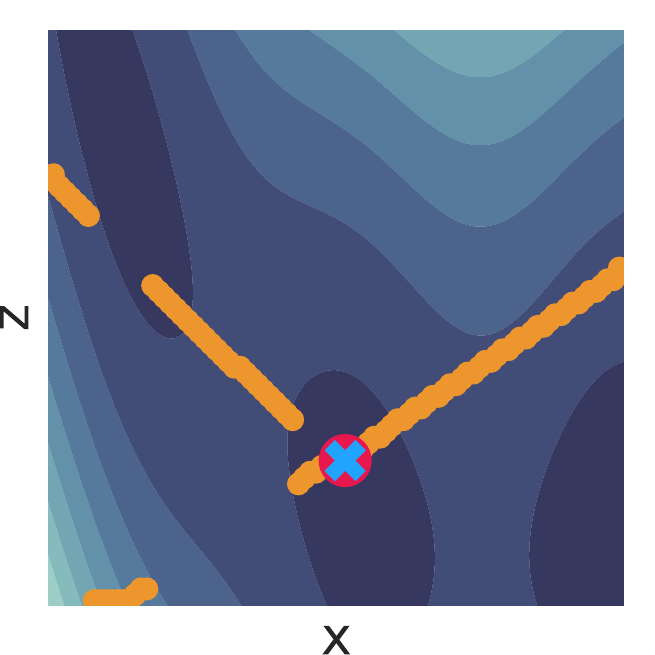}
        \caption{$F$}
        \label{fig:exp1_F}
     \end{subfigure}
     \begin{subfigure}[b]{0.16\textwidth}
        \centering
        \includegraphics[width=\textwidth]{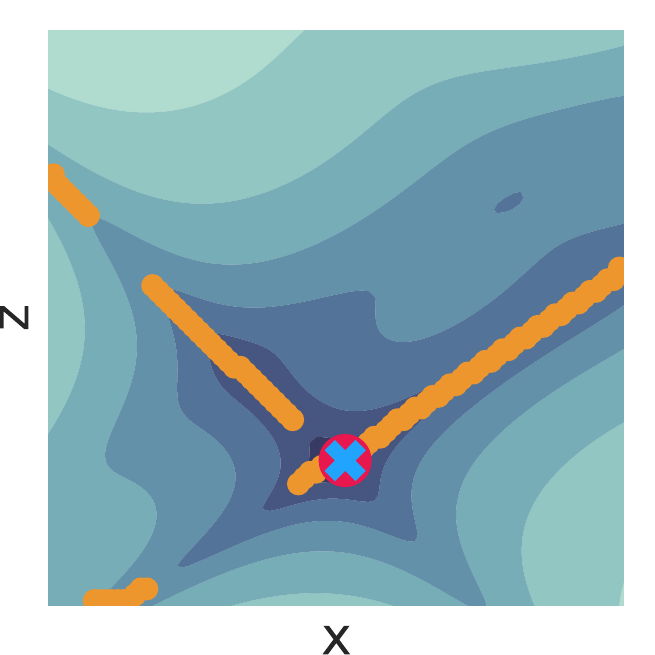}
        \caption{$f$}
        \label{fig:exp1_f} 
     \end{subfigure}
     \begin{subfigure}[b]{0.16\textwidth}
        \centering
         \includegraphics[width=\textwidth]{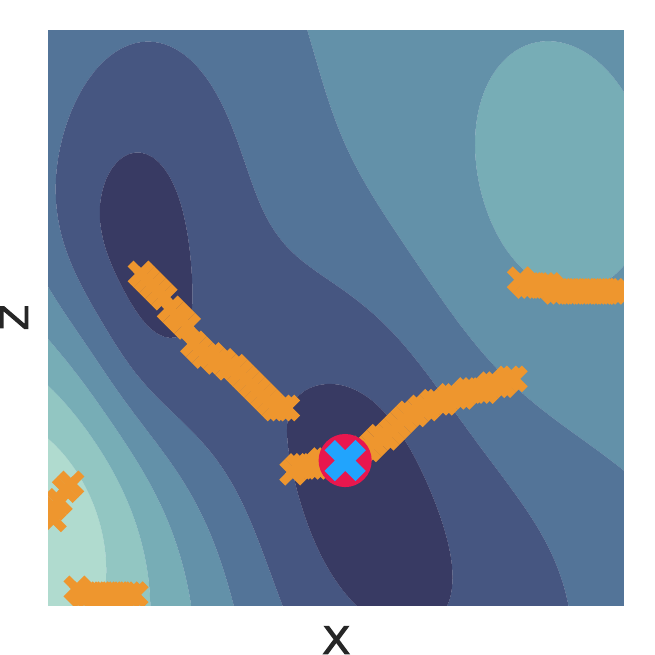}
         \caption{$\mu_{F,T}$}
         \label{fig:exp1_mu_F}
     \end{subfigure}
     \begin{subfigure}[b]{0.16\textwidth}
         \centering
         \includegraphics[width=\textwidth]{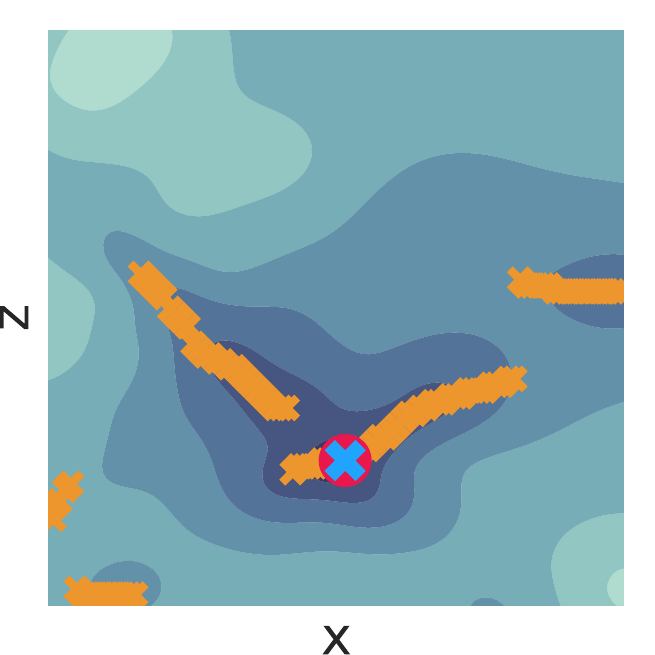}
         \caption{$\mu_{f,T}$}
         \label{fig:exp1_mu_f}
     \end{subfigure}
     \begin{subfigure}[b]{0.16\textwidth}
         \centering
         \includegraphics[width=\textwidth]{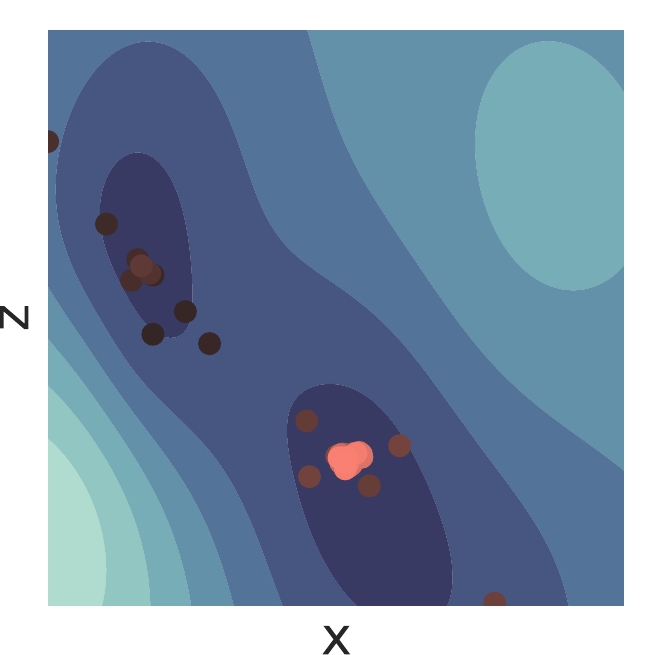}
         \caption{$F$ queries}
         \label{fig:exp1_samples_F}
     \end{subfigure}
     \begin{subfigure}[b]{0.16\textwidth}
         \centering
         \includegraphics[width=\textwidth]{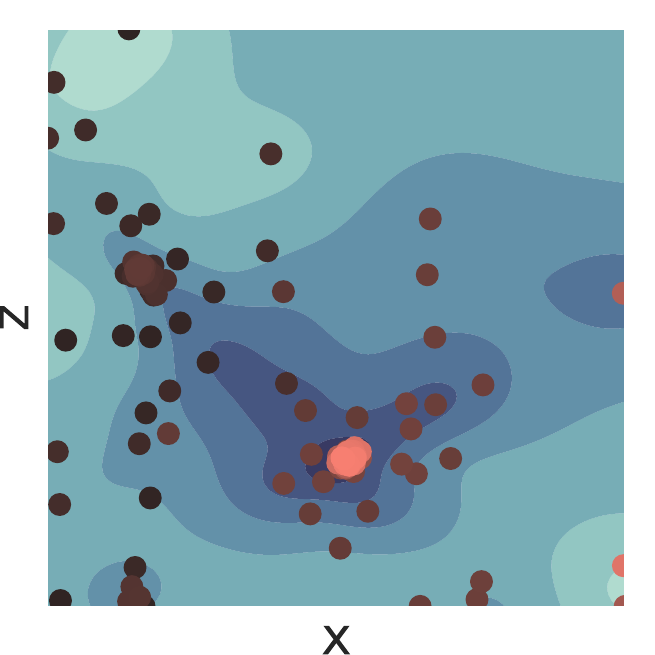}
         \caption{$f$ queries}
         \label{fig:exp1_samples_f}
     \end{subfigure}
     \caption{BraninHoo+GoldsteinPrice experiment details. LL refers to lower-level. (a) Upper-level objective, Branin-Hoo. (b) Lower-level objective, Goldstein-Price. (c) BILBO's upper-level estimate. (d) BILBO's lower-level estimate. (e-f) BILBO's iterative queries.}
     \label{fig:branin}
\end{figure*}

\begin{figure*}[ht]
    \centering
    \begin{subfigure}[b]{0.24\textwidth}
         \centering
         \includegraphics[width=\textwidth]{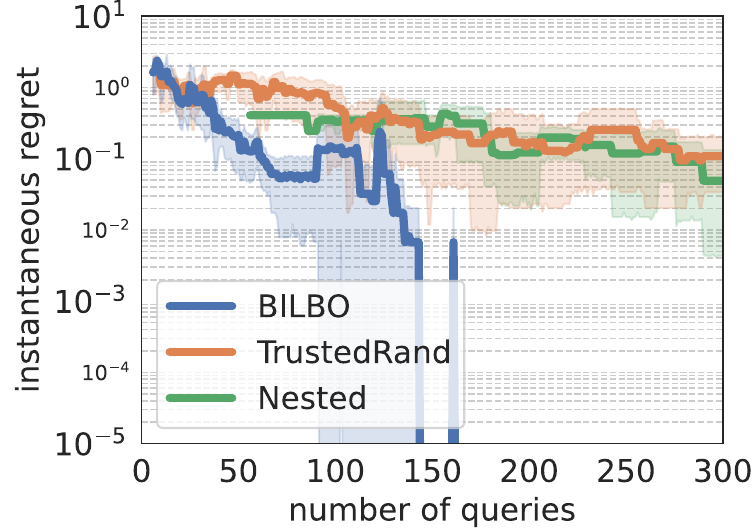}
         \caption{BraninHoo+GoldsteinPrice}
         \label{fig:syn02_reg}
    \end{subfigure}
    \begin{subfigure}[b]{0.24\textwidth}
         \centering
         \includegraphics[width=\textwidth]{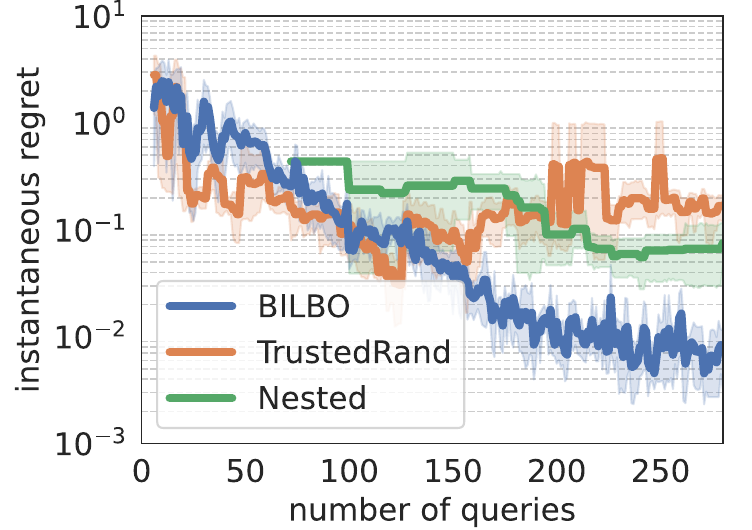}
         \caption{SMD2}
         \label{fig:smd2_reg}
    \end{subfigure}
    \begin{subfigure}[b]{0.24\textwidth}
         \centering
         \includegraphics[width=\textwidth]{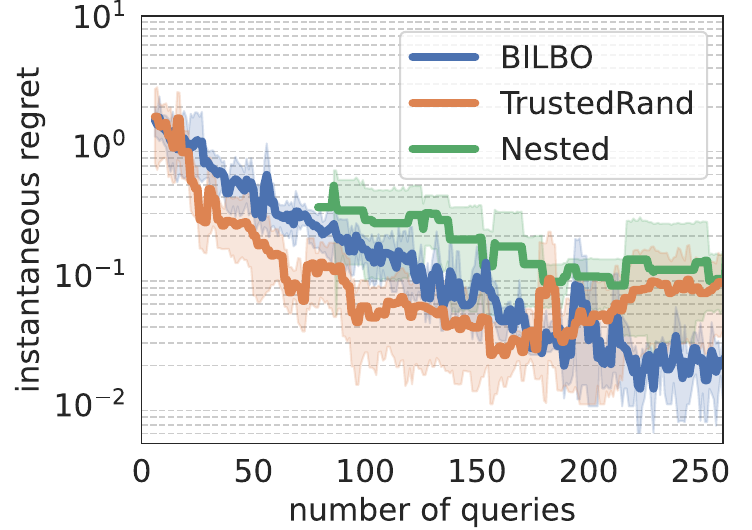}
         \caption{SMD6}
         \label{fig:smd6_reg}
    \end{subfigure}
    \begin{subfigure}[b]{0.24\textwidth}
         \centering
         \includegraphics[width=\textwidth]{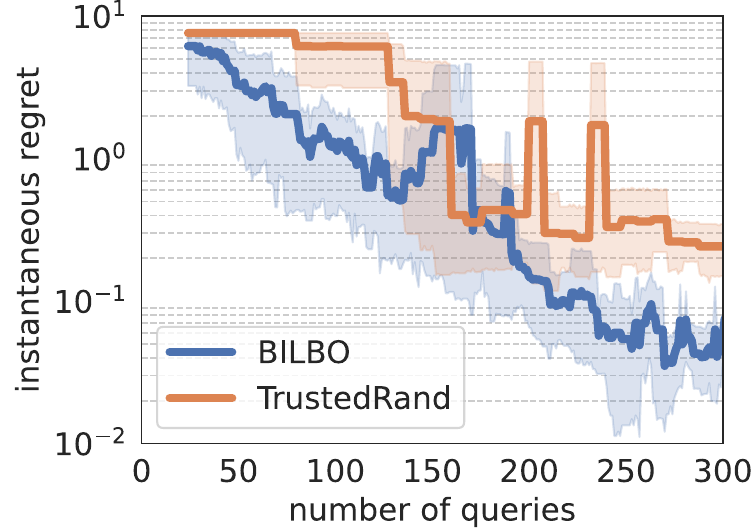}
         \caption{SMD12}
         \label{fig:smd12_reg}
    \end{subfigure}
    \caption{Instantaneous regrets (log-scale) over number of queries, averaged over 5 runs, for synthetic experiments.}
    \label{fig:smds}
\end{figure*}

\subsection{Synthetic problems}

The synthetic problems were selected to cover a variety of scenarios, including conflicting interactions, convex or multimodal functions, and active constraints. 

\textbf{BraninHoo+GoldsteinPrice} has the Branin-Hoo function as upper-level objective $F$ and the Goldstein-Price function as the lower-level objective $f$ \citep{picheny2013benchmark}. Both functions are non-convex and multimodal. The dimensions $d_{\mathcal{X}}$ and $d_{\mathcal{Z}}$ are both 1, which facilitates visualization of the models and queries. Both dimensions were discretized into 100 points. The Branin-Hoo function has 3 optimal points, but there is only 1 optimal bilevel solution when constrained by lower-level Goldstein-Price optimal solutions. 

BILBO outperforms the other two methods by a substantial margin, as seen in \autoref{fig:syn02_reg}, where it converges to the optimal bilevel solution within 150 queries. Nested and TrustedRand converge equally slowly. For Nested, the predicted lower-level solutions may be suboptimal because the lower-level solver cannot handle multimodal functions and noisy observations effectively. For TrustedRand, random queries might have led to uninformative points being sampled. The challenging multimodal characteristic of the functions also means that sampling in informative areas is integral for this problem, which BILBO successfully manages to do.

\cref{fig:exp1_F,fig:exp1_f} show the upper- and lower-level objective function respectively, with optimal lower-level solutions (yellow dots), and \cref{fig:exp1_mu_F,fig:exp1_mu_f,fig:exp1_samples_F,fig:exp1_samples_f} show BILBO's inner workings. BILBO converged to the optimal solution, and the surrogate models effectively captured the overall landscape of both functions in \cref{fig:exp1_mu_F,fig:exp1_mu_f}, where predicted lower-level solutions (yellow crosses) are close to optimal lower-level solutions, especially in regions where upper-level objective value is high. Queries selected over iterations from the upper- and lower-level objective functions are in \cref{fig:exp1_samples_F,fig:exp1_samples_f} respectively, with darker colors indicating samples from earlier iterations, and they mostly clustered around two probable optimal solutions. BILBO sampled the objective functions in the top-left region until it ascertained the absence of optimal solutions and converged on the actual optimal solution, demonstrating its effectiveness. 

\textbf{SMD2}, \textbf{SMD6}, and \textbf{SMD12} are adapted from the SMD suite of test problems for bilevel optimization \citep{sinha2014test}. Details of implementation are in \cref{sec:smd_details}. The input dimension of the test problems is set to 5, with $d_{\mathcal{X}}$ being 2 and $d_{\mathcal{Z}}$ being 3. The difficulty increases in the order of SMD2, SMD6, SMD12. SMD2 has convex functions and conflicting interactions, where improving the lower-level estimate worsens the upper-level objective value. This requires the algorithm to predict lower-level optimal solutions accurately to obtain the optimal bilevel solution. SMD6 also has convex functions and conflicting interactions, but with multiple lower-level optimal solutions at each upper-level point (i.e., a convex valley). An algorithm must concurrently estimate multiple lower-level optimal solutions and identify the point that optimizes the upper-level objective. Finally, SMD12 is the most challenging problem from the SMD suite, with both levels having 3 active constraints, where the optimal solution is on the boundary of the constraints. There are also multiple optimal solutions at the lower level. 

Results of the SMD experiments are shown in \cref{fig:smd2_reg,fig:smd6_reg,fig:smd12_reg}. For SMD2, BILBO outperforms both TrustedRand and Nested. While TrustedRand's regret decreased quickly at the start, its rate of decrease diminishes over time, likely because random queries are initially informative but become less effective as the process continues. For SMD6, BILBO has the smallest regret after around 250 steps. Nested is unable to handle multiple lower-level optimal solutions, as it predicts only one lower-level solution for each upper-level point. In comparison, the trusted sets allow multiple optimal lower-level estimates for both BILBO and TrustedRand. For SMD12, BILBO converges faster than TrustedRand. With 8 functions in the SMD12 problem, the decoupled setting becomes more crucial for sample efficiency. The faster convergence of BILBO demonstrates the effectiveness of our function query strategy in selecting more informative functions to query. The presence of active constraints did not appear to pose any difficulties for BILBO as well.

\subsection{Real-world problems}

\begin{figure*}[ht]
    \centering
    \begin{subfigure}[b]{0.147\textwidth}
        \centering
        \includegraphics[width=\textwidth]{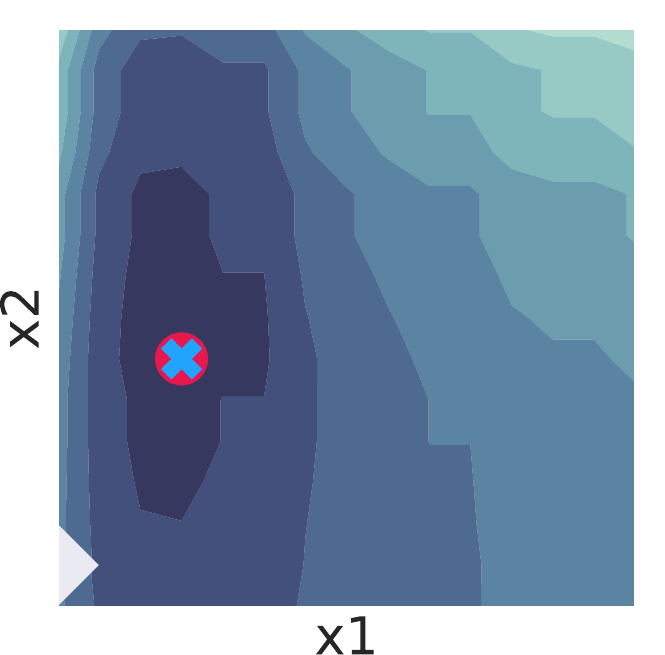}
        \caption{$F(\mathbf{x}, \mathbf{z}^*(\mathbf{x}))$}
        \label{fig:en_gt_opt_ul}
    \end{subfigure}
    \hspace{-0.4em}
    \begin{subfigure}[b]{0.147\textwidth}
        \centering
        \includegraphics[width=\textwidth]{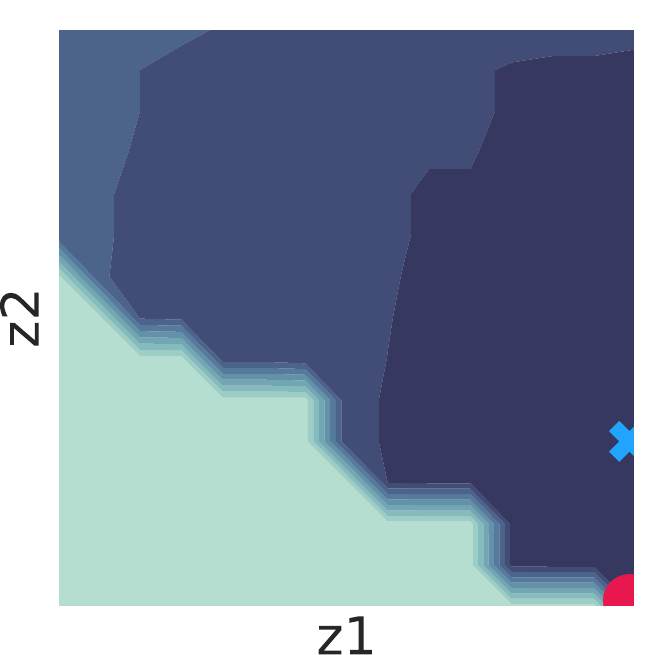}
        \caption{$f(\mathbf{x}^*, \mathbf{z})$}
        \label{fig:en_gt_opt_ll}
    \end{subfigure}
    \hspace{-0.4em}
     \begin{subfigure}[b]{0.147\textwidth}
        \centering
        \includegraphics[width=\textwidth]{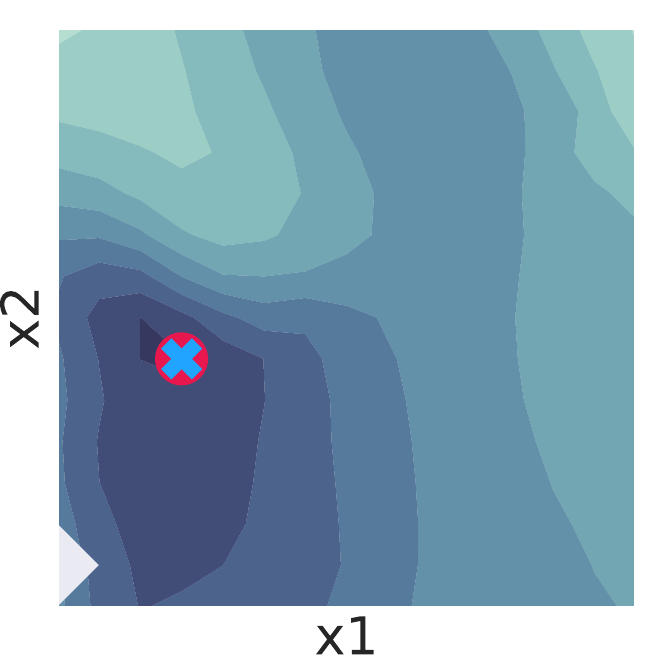}
        \caption{$\mu_{F, T}(\mathbf{x}, \bar{\mathbf{z}}_T(\mathbf{x}))$}
        \label{fig:en_est_opt_ul}
     \end{subfigure}
     \hspace{-0.4em}
     \begin{subfigure}[b]{0.147\textwidth}
        \centering
        \includegraphics[width=\textwidth]{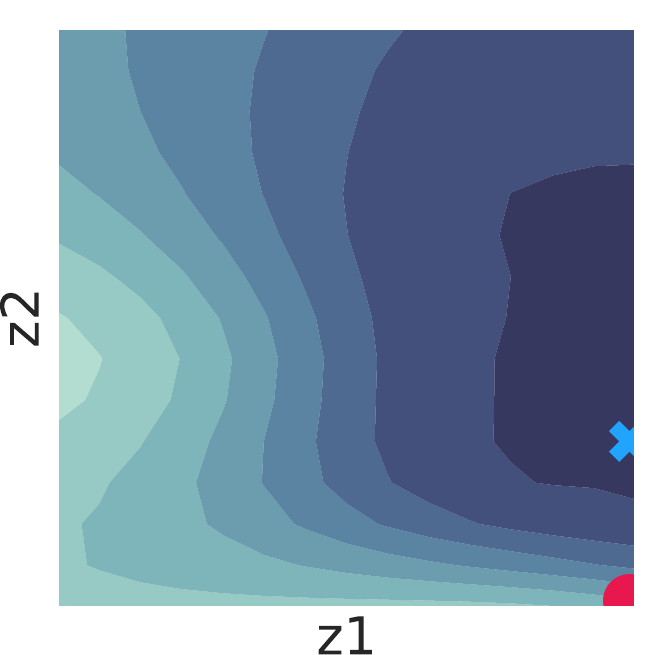}
        \caption{$\mu_{f, T}(\mathbf{x}^*, \mathbf{z})$}
        \label{fig:en_est_opt_ll}
    \end{subfigure}
    \hspace{-0.4em}
    \begin{subfigure}[b]{0.2\textwidth}
        \centering
        \includegraphics[width=\textwidth]{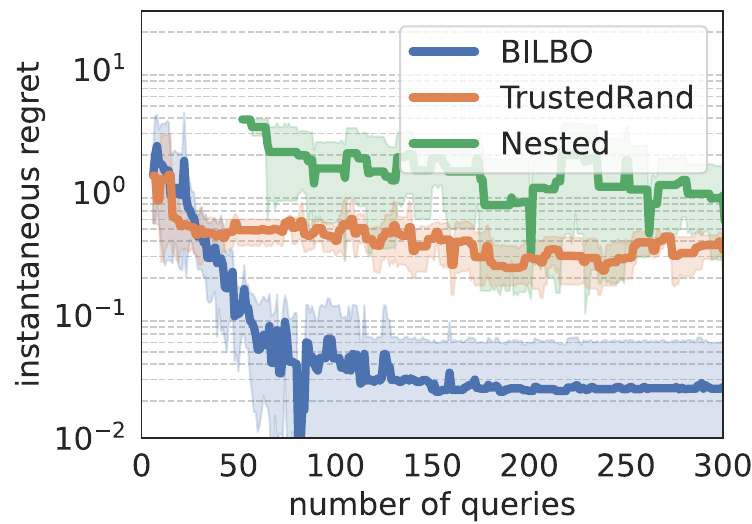}
        \caption{Energy experiment}
        \label{fig:en_reg}
    \end{subfigure}
    \begin{subfigure}[b]{0.2\textwidth}
        \centering
        \includegraphics[width=\textwidth]{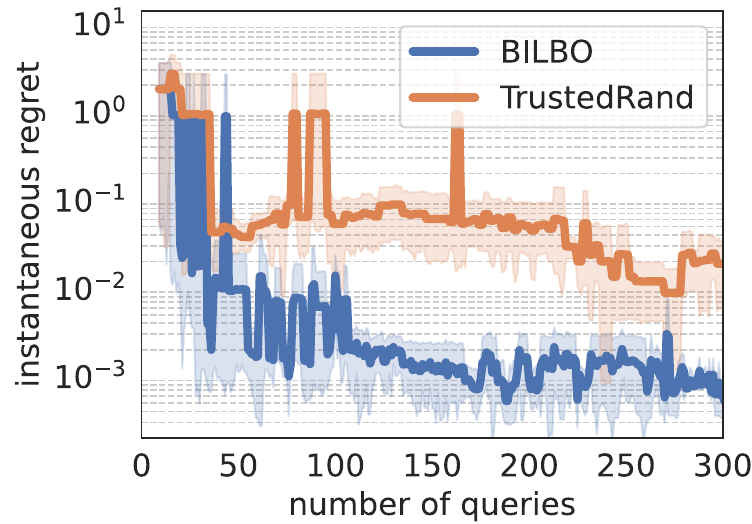}
        \caption{Chemical experiment}
        \label{fig:chem_reg}  
    \end{subfigure}
    \caption{Real-world experiments. (a-b) Functions from energy experiment and (c-d) BILBO outputs, with optimal solution (red dot) and predicted solution (blue cross). (e-f) Regret plots in log-scale for energy and chemical experiments respectively.
    }
    \label{fig:energy}
\end{figure*}

\textbf{Energy.} We simulated a bilevel energy market problem, where energy providers bid to supply an amount of electricity at the upper level to maximize profits over three time periods. At the lower level, they optimize their operations,  considering costs, demand responses to prices, and their ability to meet changing demands. There are 2 upper-level variables: price and quantity of electricity to bid, and 2 lower-level variables: the ramp limit for one power plant and the maximum power output at each period for another power plant. Lower-level variables affect overall optimal dispatch of electricity and the dispatching of three power plants was simulated using PyPSA \citep{PyPSA}. We formulated the lower-level objective as a function of simulation outputs, to seek the lowest cost of electricity generation while incorporating penalties to reduce wear and tear or other auxiliary concerns. More details in \cref{sec:energy_details}. 

\autoref{fig:en_reg} shows BILBO outperforms the other methods, with its regret decreasing the fastest. Refer to \cref{fig:en_est_opt_ul,fig:en_est_opt_ll} for surrogate models learned after one run of BILBO. \cref{fig:en_gt_opt_ul,fig:en_est_opt_ul}, respectively, show the upper-level objective $F$ at optimal lower-level solutions and the estimated upper-level objective $\mu_{F,T}$ at estimated lower-level solutions. $\mu_{F,T}$ approximates $F$ well, especially at regions with high $F$ values, and correctly predicted the optimal bilevel solution. \cref{fig:en_gt_opt_ll,fig:en_est_opt_ll}, respectively, show the lower-level objective $f$ and the estimated lower-level objective $\mu_{f,T}$ at the optimal upper-level variable. At this upper-level variable, $\mu_{f,T}$ captures the general trend of $f$, where optimal points are on the right. However, the optimal lower-level solution is at a boundary with high discontinuity. The surrogate model was unable to model this large step and predicted a suboptimal lower-level solution, resulting in an empirical asymptotic regret bound in \autoref{fig:en_reg}, compounded by noisy observations. This may be mitigated by adding a constraint function to represent the discontinuity in the lower-level objective, as BILBO has shown the capability to handle active constraints effectively in previous synthetic experiments.

\textbf{Chemical.} Chemical processes in industries such as pharmaceuticals, petrochemicals, and food production often involve multiple stages, each requiring parameter optimization. Bilevel optimization simplifies this by dividing the overall process into smaller, more manageable problems, while still accounting for the interactions between different stages. We used \citet{cocosimulatorCOCOCAPEOPEN} simulator to simulate carbonylation of Di-Methyl Ether (DME) to Methyl Acetate, adapted from the flowsheet provided by \citet{chemsep}. The upper-level problem focuses on maximizing the yield of Methyl Acetate at 99.9\% purity through a distillation column, which takes in a reaction mixture comprising Methyl Acetate, unreacted DME, and by-products. These are outputs from the lower-level optimization, which involves carbonylation of DME to produce Methyl Acetate in a reactor. Additionally, an upper-level constraint is included to ensure a suitable temperature range for chemicals to be in their correct states. There is 1 upper-level variable: the number of levels in the distillation column, and 3 lower-level variables: temperature of the reactor, number of heating tubes, and the diameter of heating tubes. More details are in \cref{sec:ap_chem}. Results are in \autoref{fig:chem_reg} where BILBO outperforms TrustedRand, highlighting the potential efficiency and effectiveness of BILBO in optimizing complex industrial operations.

Additional discussions on the computational complexity and time efficiency of BILBO can be found in \cref{sec:ap_comp_effi}.

\section{Future Work}
We have shown theoretically and empirically that BILBO is a regret-bounded, sample efficient algorithm for noisy, constrained, and derivative-free bilevel optimization. A key direction for future work is improving scalability to high-dimensional spaces, which is a common challenge in BO.

We currently model upper-level objective  $F$ over $\mathcal{X} \times \mathcal{Z}$, but this can be memory inefficient as many lower-level variables are suboptimal and irrelevant. A more efficient approach could involve directly modeling $F(\mathbf{x}, \mathbf{z}^*(\mathbf{x}))$, reducing the dimension of the surrogate model from $d_{\mathcal{X}} \times d_{\mathcal{Z}}$ to $d_{\mathcal{X}}$. This poses another challenge: incorporating the uncertainty associated with the optimality of the lower-level solution into the uncertainty of the upper-level objective value.

Adaptive discretization \citep{shekhar2018gaussian} may also reduce computational complexity by reducing the effective dimension of the explored space. Discretization strategies could be integrated with trusted sets, for example concentrating the discretizations within the trusted sets. Approximate surrogate models \citep{calandriello2019gaussian} is another possible direction for scalability while preserving confidence bound estimates. The theoretical work presented in this paper could be extended to approximate surrogate models.

The representation of trusted sets will need to scale effectively to higher dimensions as well. Possible approaches could be via sampling strategies like Latin Hypercube Sampling \citep{mckay2000comparison} for efficient point representation in high-dimensional spaces or using hyperrectangles to represent the trusted set efficiently \citep{eriksson2019scalable}.

Future work can extend our discretized implementation to continuous domains, which should be feasible with a scalable trusted set representation. Our theoretical results can also generalize to continuous settings with minor modifications to $\beta_t$ and additional assumptions, following \citet{chowdhury2017kernelized}.

\section{Conclusion}
We introduced BILBO, a novel bilevel BO algorithm that optimizes the upper- and lower-levels simultaneously. BILBO samples from confidence-bounds based trusted sets to bound lower-level suboptimality, and encourages lower-level exploration via conditional reassignment of the query, replacing the repeated lower-level optimizations required by existing methods. We show theoretically that BILBO has a sublinear regret bound, and our experiments demonstrate empirically that BILBO outperforms other bilevel optimization baselines, especially in problems with many non-convex functions. BILBO is a significant step towards a general bilevel solver, which will enable applications to complex real-world bilevel problems involving blackbox functions.

\section*{Acknowledgments}
This research is supported by the National Research Foundation (NRF), Prime Minister's Office, Singapore, under its Campus for Research Excellence and Technological Enterprise (CREATE) programme. 
The Mens, Manus, and Machina (M$3$S) is an interdisciplinary research group (IRG) of the Singapore MIT Alliance for Research and Technology (SMART) centre.

\section*{Impact Statement}
This paper presents work whose goal is to advance the field of Machine Learning. There are many potential societal consequences of our work, none which we feel must be specifically highlighted here.

\bibliography{refs}
\bibliographystyle{icml2025}

\newpage
\appendix
\onecolumn
\section{Table of Notations}
\label{sec:not}

\centerline{\bf Bilevel definitions}
\bgroup
\def\arraystretch{1.5}
\begin{tabular}{p{0.05\textwidth}p{0.39\textwidth}p{0.05\textwidth}p{0.39\textwidth}}
\multicolumn{2}{p{0.41\textwidth}}{\textbf{Upper-level}}&\multicolumn{2}{p{0.41\textwidth}}{\textbf{Lower-level}}\\
$\displaystyle \mathbf{x}$ & Upper-level variable & $\displaystyle \mathbf{z}$ & Lower-level variable\\
$\displaystyle \mathcal{X}$ & Domain of $\mathbf{x}$ & $\displaystyle \mathcal{Z}$ & Domain of $\mathbf{z}$\\
$\displaystyle d_\mathcal{X}$ & Dimension of $\mathbf{x}$ & $\displaystyle d_\mathcal{Z}$ & Dimension of $\mathbf{z}$\\
$\displaystyle F$ & Upper-level objective function & $\displaystyle f$ & Lower-level objective function\\
$\displaystyle \mathcal{C}_\text{up}$ & Set of upper-level constraint functions & $\displaystyle \mathcal{C}_\text{lo}$ & Set of lower-level constraint functions\\
$\displaystyle \mathbf{x}_t$ & Selected upper-level variable to query at time $t$ & $\displaystyle \mathbf{z}_t$ & Selected lower-level variable to query at time $t$\\
$\displaystyle \hat{\mathbf{x}}_T$ & Estimated optimal upper-level variable at time $T$ & $\displaystyle \hat{\mathbf{z}}_t$ & Estimated optimal lower-level variable at time $T$\\

\end{tabular}
\egroup
\vspace{0.25cm}

\bgroup
\def\arraystretch{1.5}
\begin{tabular}{p{0.11\textwidth}p{0.8\textwidth}}
$\displaystyle \mathcal{F}$ & Set of functions in a bilevel problem $\{F, f\} \cup \mathcal{C}_\text{up} \cup \mathcal{C}_\text{lo}$\\
$\displaystyle h$ & Arbitrary function in $\mathcal{F}$\\
$\displaystyle \mu_{h,t}(\mathbf{x}, \mathbf{z})$ & GP posterior mean at $(\mathbf{x}, \mathbf{z})$ for function $h$ at time $t$\\
$\displaystyle \sigma_{h,t}(\mathbf{x}, \mathbf{z})$ & GP posterior standard deviation at $(\mathbf{x}, \mathbf{z})$ for function $h$ at time $t$\\
$\displaystyle r_h(\mathbf{x}, \mathbf{z})$ & Instantaneous regret of function $h$ at $(\mathbf{x}, \mathbf{z})$\\
$\displaystyle r_t$ & Instantaneous bilevel regret at time $t$ on query point $(\mathbf{x}_t, \mathbf{z}_t)$\\
$\displaystyle R_T$ & Cumulative regret at time $T$\\
$\displaystyle r_T$ & Simple bilevel regret at time $T$ based on $(\hat{\mathbf{x}}_t, \hat{\mathbf{z}}_t)$\\
\end{tabular}
\egroup
\vspace{0.5cm}

\centerline{\bf BILBO notations}
\bgroup
\def\arraystretch{1.5}
\begin{tabular}{p{0.11\textwidth}p{0.8\textwidth}}
$\displaystyle u_{h,t}(\mathbf{x}, \mathbf{z})$ & Upper confidence bound of function $h$ at $(\mathbf{x}, \mathbf{z})$ (Defn. \labelcref{def:cb})\\
$\displaystyle l_{h,t}(\mathbf{x}, \mathbf{z})$ & Lower confidence bound of function $h$ at $(\mathbf{x}, \mathbf{z})$ (Defn. \labelcref{def:cb})\\
$\displaystyle \mathcal{S}^+_t$ & Trusted set of feasible solutions (Defn. \labelcref{def:ts_s})\\
$\displaystyle \mathcal{S}^+_{\text{lo},t}$ & Trusted set of feasible solutions w.r.t. only lower-level constraints (Defn. \labelcref{def:ts_p})\\
$\displaystyle \mathcal{P}^+_t$ & Trusted set of optimal lower-level solutions (Defn. \labelcref{def:ts_p})\\
$\displaystyle \bar{\mathbf{z}}_t(\mathbf{x})$ & Estimated optimal lower-level solution at $\mathbf{x}$ at timestep $t$ (Defn. \labelcref{def:ts_p})\\
$\displaystyle h_t$ & Selected function query (Defn. \labelcref{def:ht})\\
$\displaystyle \bar{r}_{h,t}$ & Estimated regret for function $h$ (Defn. \labelcref{def:ht})\\
\end{tabular}
\egroup

\newpage
\section{More preliminaries details}
\label{sec:ap_gp}

\subsection{Closed-form posteriors of Gaussian Processes}
For a GP defined as $\mathcal{GP}_h(m_h(\mathbf{xz}), k_h(\mathbf{xz}, \mathbf{xz}'))$ for a function $h$. The closed-form posterior mean is $\mu_{h,t-1}(\mathbf{xz}) \triangleq m_h(\mathbf{xz}) + \mathbf{k}_{h, t-1}(\mathbf{xz})^\top(\mathbf{K}_{h, t-1} + \sigma^2 \mathbf{I})^{-1}(\mathbf{y}_{h,t-1} - \mathbf{m}_{h, t-1})$ and variance $\sigma^2_{h,t-1}(\mathbf{xz}) \triangleq k_h(\mathbf{xz}, \mathbf{xz}) - \mathbf{k}_{h, t-1}^\top(\mathbf{K}_{h,t-1} + \sigma^2 \mathbf{I})^{-1} \mathbf{k}_{h, t-1}^{-1}$ where $\mathbf{m}_{h, t-1} \triangleq [m_h(\mathbf{x,z})]_{\mathbf{xz} \in \mathbf{xz}_{:t-1}}$, $\mathbf{k}_{h,t-1}(\mathbf{xz}) \triangleq [k_h(\mathbf{xz}, \mathbf{xz}')]_{\mathbf{xz}' \in \mathbf{xz}_{:t-1}}$, and $\mathbf{K}_{h,t-1} \triangleq [k_h(\mathbf{xz}, \mathbf{xz'})]_{\mathbf{xz}, \mathbf{xz'} \in \mathbf{xz}_{:t-1}}$.

\subsection{Maximum information gain}
Maximum information gain, $\gamma_{h,t}$, on a function $h$, where $d \triangleq d_\mathcal{X} + d_\mathcal{Z}$ and $T(h)$ contains the timesteps when function $h$ was selected for query, from \citet{vakili2021information}:
\begin{itemize}
    \item Squared Exponential kernel: $\mathcal{O}(\log^{d+1}(T(h)))$
    \item Mat\'ern kernels with $\nu > \frac{1}{2}$: $\mathcal{O}(T^\frac{d}{2\nu + d} \log^\frac{2 \nu}{ 2 \nu + d} (T(h)))$
\end{itemize}

\section{Proofs}

\subsection{Proof of \cref{lm:ts_s}}
\label{sec:ap_s}
\begin{proof}
$\forall c \in \mathcal{C}_\text{up} \cup \mathcal{C}_\text{lo}, (\mathbf{x}, \mathbf{z}) \in \mathcal{S}^+_t$,
\begin{align*}
    r_{c,t}(\mathbf{x}, \mathbf{z}) &\triangleq \max(0, - c(\mathbf{x}, \mathbf{z})) &\text{from \cref{eq:r_c}}\\
    &\leq \max(0, - l_{c,t}(\mathbf{x}, \mathbf{z})) &\text{from \cref{coro:cb}}\\
    &\leq \max(0, u_{c,t}(\mathbf{x}, \mathbf{z}) - l_{c,t}(\mathbf{x}, \mathbf{z})) &\text{from ($\mathbf{x}, \mathbf{z}) \in \mathcal{S}^+_t$}\\
    &\leq 2 \beta^{1/2}_t \sigma_{c, t-1}(\mathbf{x}, \mathbf{z}). &\text{from \cref{def:cb}}
\end{align*}
\end{proof}

\subsection{Proof of \cref{lm:ts_p}}
\label{sec:ap_p}

\begin{lemma}
\label{lm:z*_p}
$\forall \mathbf{x} \in \{ \mathbf{x} \mid (\mathbf{x}, \mathbf{z}) \in \mathcal{P}^+_t\}$,
\begin{equation}
    u_{f, t}(\mathbf{x}, \bar{\mathbf{z}}_t(\mathbf{x})) \geq u_{f, t}(\mathbf{x}, \mathbf{z}^*(\mathbf{x})),
\end{equation}
where $\bar{\mathbf{z}}_t(\mathbf{x}) \triangleq \arg \max_{\mathbf{z} \in \mathcal{S}^+_{\text{lo}, t} (\mathbf{x})} u_{f, t}(\mathbf{x}, \mathbf{z})$ is the estimated optimal lower-level solution at $\mathbf{x}$, and $\mathbf{z}^*(\mathbf{x})$ is the actual optimal lower-level solution at $\mathbf{x}$.
\end{lemma}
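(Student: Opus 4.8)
The plan is to reduce the claim to a single membership fact: that the true optimal lower-level solution $\mathbf{z}^*(\mathbf{x})$ belongs to the lower-level trusted set $\mathcal{S}^+_{\text{lo},t}(\mathbf{x})$. Once this is established, the inequality is immediate, because $\bar{\mathbf{z}}_t(\mathbf{x})$ is by definition (\cref{eq:z_bar}) the maximizer of $u_{f,t}(\mathbf{x}, \cdot)$ over exactly that set, so any competitor lying in the set — in particular $\mathbf{z}^*(\mathbf{x})$ — cannot beat it.

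First I would invoke feasibility of the true optimal lower-level solution. Since $\mathbf{z}^*(\mathbf{x}) \in \mathcal{P}(\mathbf{x})$ is a maximizer of $f(\mathbf{x}, \cdot)$ subject to the lower-level constraints, it is in particular feasible, so $c(\mathbf{x}, \mathbf{z}^*(\mathbf{x})) \geq 0$ for every $c \in \mathcal{C}_{\text{lo}}$. Next I would apply \cref{coro:cb}: on the high-probability event (probability at least $1 - \delta$) on which all confidence bounds hold, we have $c(\mathbf{x}, \mathbf{z}) \leq u_{c,t}(\mathbf{x}, \mathbf{z})$ for all arguments, and hence $u_{c,t}(\mathbf{x}, \mathbf{z}^*(\mathbf{x})) \geq c(\mathbf{x}, \mathbf{z}^*(\mathbf{x})) \geq 0$ for every $c \in \mathcal{C}_{\text{lo}}$. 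By the definition of $\mathcal{S}^+_{\text{lo},t}$ in \cref{def:ts_p}, this is precisely the statement that $\mathbf{z}^*(\mathbf{x}) \in \mathcal{S}^+_{\text{lo},t}(\mathbf{x})$. The concluding step is then purely definitional: $u_{f,t}(\mathbf{x}, \bar{\mathbf{z}}_t(\mathbf{x})) = \max_{\mathbf{z} \in \mathcal{S}^+_{\text{lo},t}(\mathbf{x})} u_{f,t}(\mathbf{x}, \mathbf{z}) \geq u_{f,t}(\mathbf{x}, \mathbf{z}^*(\mathbf{x}))$.

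The argument itself is short, so the only delicate point — and the thing I would be careful to handle — is well-definedness rather than any hard inequality. I would note that the restriction to $\mathbf{x}$ with $(\mathbf{x}, \mathbf{z}) \in \mathcal{P}^+_t$ guarantees $\mathcal{S}^+_{\text{lo},t}(\mathbf{x})$ is nonempty, so that $\bar{\mathbf{z}}_t(\mathbf{x})$ (and hence the left-hand side) is defined, and I would make explicit the standing assumption that $\mathbf{z}^*(\mathbf{x})$ exists (the lower-level problem is feasible at $\mathbf{x}$), since $u_{c,t}(\mathbf{x},\mathbf{z}) \geq 0$ alone does not certify true feasibility. Finally, since the membership step relies on \cref{coro:cb}, I would state the result as holding on that same event of probability at least $1-\delta$, keeping the conclusion consistent with the confidence-bound framework used throughout and with the subsequent use of this lemma inside the proof of \cref{lm:ts_p}.
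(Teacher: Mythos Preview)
Your proposal is correct and follows essentially the same approach as the paper: show that $\mathbf{z}^*(\mathbf{x})$ lies in $\mathcal{S}^+_{\text{lo},t}(\mathbf{x})$ (the paper phrases this as the set containment $\mathcal{S}_{\text{lo}} \subseteq \mathcal{S}^+_{\text{lo},t}$ via \cref{coro:cb}, which is the same pointwise argument you give), and then invoke the definition of $\bar{\mathbf{z}}_t(\mathbf{x})$ as the maximizer over that set. Your additional remarks on well-definedness and the high-probability qualifier are more explicit than the paper's, but the core argument is identical.
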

\begin{proof}
By definition of $\bar{\mathbf{z}}_t(\mathbf{x})$, $\forall (\mathbf{x}, \mathbf{z}) \in \mathcal{S}^+_{\text{lo},t}, u_{f, t}(\mathbf{x}, \bar{\mathbf{z}}_t(\mathbf{x})) \geq u_{f,t}(\mathbf{x}, \mathbf{z})$.

Let $\mathcal{S}_\text{lo} \triangleq \{(\mathbf{x}, \mathbf{z}) \mid c(\mathbf{x}, \mathbf{z}) \geq 0 \;\; \forall c \in \mathcal{C}_\text{lo}\}$ be the unknown set of feasible solutions w.r.t.~lower-level constraints. Then, $(\mathbf{x}, \mathbf{z}^*(\mathbf{x})) \in \mathcal{S}^+_{\text{lo},t}$, because $(\mathbf{x}, \mathbf{z}^*(\mathbf{x})) \in \mathcal{S}_\text{lo}$ by definition and $\mathcal{S}_\text{lo} \subseteq \mathcal{S}^+_{\text{lo},t}$ from \cref{coro:cb}. 

Finally, by \cref{def:ts_p} of $\mathcal{P}^+_t$, $\mathcal{P}^+_t \subseteq \mathcal{S}^+_{\text{lo},t}$.

\end{proof}

\textbf{Main proof} for instantaneous regret  bound on $f$ in \cref{lm:ts_p}.
\begin{proof}
$\forall (\mathbf{x}, \mathbf{z}) \in \mathcal{P}^+_t$,
\begin{align*}
    r_{f,t}(\mathbf{x}, \mathbf{z}) &= f(\mathbf{x}, \mathbf{z}^*(\mathbf{x})) - f(\mathbf{x}, \mathbf{z}) &\text{from \cref{eq:r_f}}\\
    &\leq u_{f, t}(\mathbf{x}, \mathbf{z}^*(\mathbf{x})) - l_{f, t}(\mathbf{x}, \mathbf{z}) &\text{from \cref{coro:cb}}\nonumber\\
    &\leq u_{f, t}(\mathbf{x}, \bar{\mathbf{z}}_t(\mathbf{x})) - l_{f, t}(\mathbf{x}, \mathbf{z}). &\text{from \cref{lm:z*_p}}\nonumber
\end{align*}
For $\mathbf{z} = \bar{\mathbf{z}}_t(\mathbf{x})$,
\begin{align*}
    r_{f,t}(\mathbf{x}, \mathbf{z}) &\leq u_{f, t}(\mathbf{x}, \bar{\mathbf{z}}_t(\mathbf{x})) - l_{f, t}(\mathbf{x}, \bar{\mathbf{z}}_t(\mathbf{x})) \\
    &= 2 \beta_t^{1/2} \sigma_{f,t-1}(\mathbf{x}, \bar{\mathbf{z}}_t(\mathbf{x})) &\text{from \cref{def:cb}}
\end{align*}
and for $\mathbf{z} \neq \bar{\mathbf{z}}_t(\mathbf{x})$,
\begin{align*}
    r_{f,t}(\mathbf{x}, \mathbf{z}) &\leq u_{f, t}(\mathbf{x}, \bar{\mathbf{z}}_t(\mathbf{x})) - l_{f, t}(\mathbf{x}, \mathbf{z}) \\
    &\leq u_{f, t}(\mathbf{x}, \bar{\mathbf{z}}_t(\mathbf{x})) - u_{f, t}(\mathbf{x}, \mathbf{z}) + 2 \beta_t^{1/2} \sigma_{f,t-1}(\mathbf{x}, \mathbf{z})  &\text{from \cref{def:cb}}\\
    &\leq u_{f, t}(\mathbf{x}, \bar{\mathbf{z}}_t(\mathbf{x})) - l_{f, t}(\mathbf{x}, \bar{\mathbf{z}}_t(\mathbf{x})) + 2 \beta_t^{1/2} \sigma_{f,t-1}(\mathbf{x}, \mathbf{z})  &\text{from $(\mathbf{x}, \mathbf{z}) \in \mathcal{P}^+_t$}\\
    &= 2 \beta_t^{1/2} \sigma_{f,t-1}(\mathbf{x}, \bar{\mathbf{z}}_t(\mathbf{x})) + 2 \beta_t^{1/2} \sigma_{f,t-1}(\mathbf{x}, \mathbf{z}). &\text{from \cref{def:cb}} 
\end{align*}
Combining both cases, we get the instantaneous regret for lower-level objective function as
\begin{align*}
    r_{f, t}(\mathbf{x}, \mathbf{z}) &\leq \mathmybb{1}_{\mathbf{z} \neq \bar{\mathbf{z}}_t(\mathbf{x})} 2 \beta_t^{1/2} \sigma_{f,t-1}(\mathbf{x}, \bar{\mathbf{z}}_t(\mathbf{x})) + 2 \beta_t^{1/2} \sigma_{f,t-1}(\mathbf{x}, \mathbf{z}).
\end{align*}
\end{proof}

\subsection{Proof of \cref{lm:ht}}
\label{sec:ap_ht_reg}

\begin{lemma}
\label{lm:xz*_p}
\begin{equation*}
    (\mathbf{x}^*, \mathbf{z}^*) \in \mathcal{S}^+_t \cap \mathcal{P}^+_t,
\end{equation*}
where $(\mathbf{x}^*, \mathbf{z}^*)$ is the optimal bilevel solution.
\end{lemma}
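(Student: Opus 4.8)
The plan is to prove the two set memberships separately, since $(\mathbf{x}^*, \mathbf{z}^*) \in \mathcal{S}^+_t \cap \mathcal{P}^+_t$ holds exactly when the point lies in each set, and both arguments are driven by the confidence-bound containment of \cref{coro:cb}, which I assume holds on the underlying high-probability event. First, for $(\mathbf{x}^*, \mathbf{z}^*) \in \mathcal{S}^+_t$: since $(\mathbf{x}^*, \mathbf{z}^*)$ is the optimal bilevel solution it is feasible, so $c(\mathbf{x}^*, \mathbf{z}^*) \geq 0$ for every $c \in \mathcal{C}_{\text{up}}$ (the upper-level feasibility constraint) and every $c \in \mathcal{C}_{\text{lo}}$ (because $\mathbf{z}^* = \mathbf{z}^*(\mathbf{x}^*) \in \mathcal{P}(\mathbf{x}^*)$ satisfies the lower-level constraints by definition of $\mathcal{P}$). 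Combining with $c(\mathbf{x}^*, \mathbf{z}^*) \leq u_{c,t}(\mathbf{x}^*, \mathbf{z}^*)$ from \cref{coro:cb} gives $u_{c,t}(\mathbf{x}^*, \mathbf{z}^*) \geq 0$ for all constraints, which is exactly the membership condition of \cref{def:ts_s}. The identical reasoning restricted to $\mathcal{C}_{\text{lo}}$ simultaneously shows $(\mathbf{x}^*, \mathbf{z}^*) \in \mathcal{S}^+_{\text{lo},t}$, which is the first requirement of \cref{def:ts_p}.

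It then remains to verify the objective condition of \cref{def:ts_p}, namely $u_{f,t}(\mathbf{x}^*, \mathbf{z}^*) \geq l_{f,t}(\mathbf{x}^*, \bar{\mathbf{z}}_t(\mathbf{x}^*))$. I would establish this through the sandwich
\begin{align*}
l_{f,t}(\mathbf{x}^*, \bar{\mathbf{z}}_t(\mathbf{x}^*)) \leq f(\mathbf{x}^*, \bar{\mathbf{z}}_t(\mathbf{x}^*)) \leq f(\mathbf{x}^*, \mathbf{z}^*) \leq u_{f,t}(\mathbf{x}^*, \mathbf{z}^*),
\end{align*}
where the outer two inequalities are the lower- and upper-confidence-bound containments of \cref{coro:cb} applied at $(\mathbf{x}^*, \bar{\mathbf{z}}_t(\mathbf{x}^*))$ and $(\mathbf{x}^*, \mathbf{z}^*)$ respectively, and the middle inequality asserts that the true lower-level value at the estimated solution $\bar{\mathbf{z}}_t(\mathbf{x}^*)$ does not exceed the value at the genuine lower-level optimum $\mathbf{z}^* = \mathbf{z}^*(\mathbf{x}^*)$.

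The main obstacle is precisely this middle inequality. It is immediate once $\bar{\mathbf{z}}_t(\mathbf{x}^*)$ is known to be feasible for the lower-level constraints, because $\mathbf{z}^*(\mathbf{x}^*)$ by definition maximizes $f(\mathbf{x}^*, \cdot)$ over the \emph{true} lower-level feasible set. The subtlety is that \cref{eq:z_bar} only guarantees $\bar{\mathbf{z}}_t(\mathbf{x}^*) \in \mathcal{S}^+_{\text{lo},t}(\mathbf{x}^*)$, i.e.~feasibility under the optimistic set, which can strictly contain the true feasible set; a point with a nonnegative upper constraint bound need not satisfy the actual constraint. I therefore expect the careful part of the argument to be showing that, on the good event of \cref{coro:cb}, the comparison $f(\mathbf{x}^*, \bar{\mathbf{z}}_t(\mathbf{x}^*)) \leq f(\mathbf{x}^*, \mathbf{z}^*)$ still holds — either by invoking the problem's feasibility structure or by exploiting that the containment $\mathcal{S}_{\text{lo}} \subseteq \mathcal{S}^+_{\text{lo},t}$ (used already in \cref{lm:z*_p}) keeps $\bar{\mathbf{z}}_t(\mathbf{x}^*)$ from overshooting the true optimum. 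By contrast, I anticipate the $\mathcal{S}^+_t$ membership and the two confidence-bound containments to be routine.

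Finally, I would combine the two memberships to conclude $(\mathbf{x}^*, \mathbf{z}^*) \in \mathcal{S}^+_t \cap \mathcal{P}^+_t$, which is the content of the lemma; this is the ingredient that certifies the true optimum is never excluded from the reduced search space $\mathcal{S}^+_t \cap \mathcal{P}^+_t$ used in the query rule of \cref{eqn:query}, and hence underpins the regret analysis of \cref{lm:ht,t:blo_r}.
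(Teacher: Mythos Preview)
Your plan is exactly the paper's proof: show $(\mathbf{x}^*,\mathbf{z}^*)\in\mathcal{S}^+_t$ and $(\mathbf{x}^*,\mathbf{z}^*)\in\mathcal{S}^+_{\text{lo},t}$ from true feasibility plus \cref{coro:cb}, and then verify the $\mathcal{P}^+_t$ objective condition via the identical chain
\[
u_{f,t}(\mathbf{x}^*,\mathbf{z}^*)\ \geq\ f(\mathbf{x}^*,\mathbf{z}^*)\ \geq\ f(\mathbf{x}^*,\bar{\mathbf{z}}_t(\mathbf{x}^*))\ \geq\ l_{f,t}(\mathbf{x}^*,\bar{\mathbf{z}}_t(\mathbf{x}^*)).
\]

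The concern you single out --- the middle inequality $f(\mathbf{x}^*,\mathbf{z}^*)\geq f(\mathbf{x}^*,\bar{\mathbf{z}}_t(\mathbf{x}^*))$ --- is legitimate, and you should know that the paper does \emph{not} supply any additional justification for it: the chain is written as-is. Your diagnosis is correct that optimality of $\mathbf{z}^*=\mathbf{z}^*(\mathbf{x}^*)$ over the \emph{true} lower-level feasible set only yields this inequality if $\bar{\mathbf{z}}_t(\mathbf{x}^*)$ is itself truly lower-level feasible, whereas \cref{eq:z_bar} only places it in the optimistic superset $\mathcal{S}^+_{\text{lo},t}(\mathbf{x}^*)$. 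Neither the containment $\mathcal{S}_{\text{lo}}\subseteq\mathcal{S}^+_{\text{lo},t}$ nor \cref{lm:z*_p} closes this gap (both go in the wrong direction for this purpose). So the step is immediate when $\mathcal{C}_{\text{lo}}=\emptyset$, but in the constrained case the paper's argument tacitly assumes it; do not expect to find a hidden ingredient you are missing.
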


\begin{proof}
Let the unknown feasible set be $\mathcal{S} \triangleq \{(\mathbf{x}, \mathbf{z}) \mid c(\mathbf{x}, \mathbf{z}) \geq 0 \;\; \forall c \in \mathcal{C}_\text{up} \cup \mathcal{C}_\text{lo}\}$. Since $(\mathbf{x}^*, \mathbf{z}^*) \in \mathcal{S}$ by definition and $\mathcal{S} \subseteq \mathcal{S}^+_t$ by \cref{coro:cb}, we have $(\mathbf{x}^*, \mathbf{z}^*) \in \mathcal{S}^+_t$.

Let unknown feasible set w.r.t.~lower-level constraints be $\mathcal{S}_\text{lo} \triangleq \{(\mathbf{x}, \mathbf{z}) \mid c(\mathbf{x}, \mathbf{z}) \geq 0 \;\; \forall c \in \mathcal{C}_\text{lo}\}$. Similarly, we have $(\mathbf{x}^*, \mathbf{z}^*) \in \mathcal{S_\text{lo}} \subseteq \mathcal{S}^+_{\text{lo}, t}$. Since $u_{f,t}(\mathbf{x}^*, \mathbf{z}^*) \geq f(\mathbf{x}^*, \mathbf{z}^*) \geq f(\mathbf{x}^*, \bar{\mathbf{z}}_t(\mathbf{x}^*)) \geq l_{f, t}(\mathbf{x}^*, \bar{\mathbf{z}}_t(\mathbf{x}^*))$, we have $(\mathbf{x}^*, \mathbf{z}^*) \in \mathcal{P}^+_t$.

\begin{equation*}
    (\mathbf{x}^*, \mathbf{z}^*) \in \mathcal{S}^+_t \; \text{and} \; (\mathbf{x}^*, \mathbf{z}^*) \in \mathcal{P}^+_t \Rightarrow (\mathbf{x}^*, \mathbf{z}^*) \in \mathcal{S}^+_t \cap \mathcal{P}^+_t
\end{equation*}
\end{proof}

\begin{lemma}
\label{lm:r_F}
For some small $\delta > 0$, with probability at least $1 - \delta$, the instantaneous upper-level objective regret is upper bounded at the query point,
\begin{equation*}
    r_F(\mathbf{x}_t, \mathbf{z}_t) \leq 2 \beta^{1/2}_t \sigma_{F,t-1}(\mathbf{x}_t, \mathbf{z}_t).
\end{equation*}
\end{lemma}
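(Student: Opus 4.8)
The plan is to run the standard GP-UCB optimism argument, adapted to the constrained bilevel search space $\mathcal{S}^+_t \cap \mathcal{P}^+_t$. The key observation is that the query point $(\mathbf{x}_t, \mathbf{z}_t)$ is chosen in \cref{eqn:query} to maximize $u_{F,t}$ over this set, and by \cref{lm:xz*_p} the true optimum $(\mathbf{x}^*, \mathbf{z}^*)$ lies in the same set. This means the optimistic estimate at the query point dominates the optimistic estimate at the optimum, which is the crucial inequality that lets the bound close.

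First I would dispose of the trivial case: if $F(\mathbf{x}^*, \mathbf{z}^*) - F(\mathbf{x}_t, \mathbf{z}_t) \leq 0$, then by \cref{eq:r_F} we have $r_F(\mathbf{x}_t, \mathbf{z}_t) = 0$, which is at most $2\beta_t^{1/2}\sigma_{F,t-1}(\mathbf{x}_t, \mathbf{z}_t)$ since the right-hand side is nonnegative. So it suffices to treat the case where the difference is positive and the $\max$ in \cref{eq:r_F} is achieved by the second argument.

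In that case I would chain the confidence bounds. Conditioning on the high-probability event of \cref{coro:cb}, I would write
\begin{align*}
    F(\mathbf{x}^*, \mathbf{z}^*) &\leq u_{F,t}(\mathbf{x}^*, \mathbf{z}^*) \leq u_{F,t}(\mathbf{x}_t, \mathbf{z}_t),
\end{align*}
where the first step uses \cref{coro:cb} and the second uses that $(\mathbf{x}^*, \mathbf{z}^*) \in \mathcal{S}^+_t \cap \mathcal{P}^+_t$ by \cref{lm:xz*_p} together with the $\arg\max$ definition of $(\mathbf{x}_t, \mathbf{z}_t)$ in \cref{eqn:query}. Simultaneously, \cref{coro:cb} gives $F(\mathbf{x}_t, \mathbf{z}_t) \geq l_{F,t}(\mathbf{x}_t, \mathbf{z}_t)$. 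Subtracting these and substituting the definitions of $u_{F,t}$ and $l_{F,t}$ from \cref{def:cb} yields
\begin{align*}
    r_F(\mathbf{x}_t, \mathbf{z}_t) &\leq u_{F,t}(\mathbf{x}_t, \mathbf{z}_t) - l_{F,t}(\mathbf{x}_t, \mathbf{z}_t) = 2\beta_t^{1/2}\sigma_{F,t-1}(\mathbf{x}_t, \mathbf{z}_t),
\end{align*}
as required.

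There is no genuine obstacle here beyond correctly invoking the supporting lemma: the entire argument hinges on \cref{lm:xz*_p}, which guarantees the optimum is never excluded from the trusted sets, so that optimism over $\mathcal{S}^+_t \cap \mathcal{P}^+_t$ upper-bounds $F(\mathbf{x}^*, \mathbf{z}^*)$. The only point requiring mild care is handling the $\max(0, \cdot)$ in the definition of $r_F$, which is why I split into the two cases above; everything else is a direct telescoping of the confidence-bound width. Since this is the upper-level objective component, the bound is the single-function GP-UCB width $2\beta_t^{1/2}\sigma_{F,t-1}$ with no extra estimated-solution term, in contrast to the lower-level bound of \cref{lm:ts_p}.
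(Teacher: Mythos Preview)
Your proof is correct and follows essentially the same route as the paper's: invoke \cref{coro:cb} for the confidence bounds, \cref{lm:xz*_p} to place $(\mathbf{x}^*,\mathbf{z}^*)$ inside $\mathcal{S}^+_t\cap\mathcal{P}^+_t$, the $\arg\max$ definition \cref{eqn:query} to pass to $(\mathbf{x}_t,\mathbf{z}_t)$, and then collapse $u_{F,t}-l_{F,t}$ via \cref{def:cb}. The only cosmetic difference is that you split off the trivial $r_F=0$ case explicitly, whereas the paper carries the $\max(0,\cdot)$ through the chain and drops it once the right-hand side is manifestly nonnegative.
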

\begin{proof}
\begin{align*}
    r_F(\mathbf{x}_t, \mathbf{z}_t) &\triangleq \max(0, F(\mathbf{x}^*, \mathbf{z}^*) - F(\mathbf{x}_t, \mathbf{z}_t)) &\text{from \cref{eq:r_F}}\\
    &\leq \max(0, u_{F,t}(\mathbf{x}^*, \mathbf{z}^*) - l_{F,t}(\mathbf{x}_t, \mathbf{z}_t)) &\text{from \cref{coro:cb}} \\
    &\leq \max_{(\mathbf{x}, \mathbf{z}) \in \mathcal{S}^+_t \cap \mathcal{P}^+_t} u_{F, t} (\mathbf{x}, \mathbf{z}) - l_{F,t}(\mathbf{x}_t, \mathbf{z}_t) &\text{from \cref{lm:xz*_p}}\\
    &= u_{F,t}(\mathbf{x}_t, \mathbf{z}_t) - l_{F,t}(\mathbf{x}_t, \mathbf{z}_t) &\text{from $\mathbf{x}_t, \mathbf{z}_t \triangleq {\arg\max}_{\mathcal{S}^+_t \cap \mathcal{P}_t^+} u_{F,t}$}\\
    &= 2 \beta^{1/2}_t \sigma_{F,t-1}(\mathbf{x}_t, \mathbf{z}_t). &\text{from \cref{def:cb}}
\end{align*}
\end{proof}

\begin{lemma}
\label{lm:r_r_bar}
Given the estimated regret of the selected function query $h_t$ at the query point by \cref{def:ht}, the instantaneous regret $r_t$ is upper bounded,
\begin{equation*}
    r_t \leq \bar{r}_{h_t, t}(\mathbf{x}_t, \mathbf{z}_t).
\end{equation*}
\end{lemma}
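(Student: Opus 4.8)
The plan is to establish the per-function domination $r_h(\mathbf{x}_t, \mathbf{z}_t) \leq \bar{r}_{h,t}(\mathbf{x}_t, \mathbf{z}_t)$ for every $h \in \mathcal{F}$ and then pass to the maximum, exploiting that $h_t$ is by definition the maximizer of $\bar{r}_{h,t}$. The first thing I would record is that the query point $(\mathbf{x}_t, \mathbf{z}_t)$ chosen in \cref{eqn:query} lies in $\mathcal{S}^+_t \cap \mathcal{P}^+_t$ by construction, which is precisely the membership condition needed to invoke the trusted-set regret bounds at this point.

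Next I would partition $\mathcal{F} = \{F, f\} \cup \mathcal{C}_\text{up} \cup \mathcal{C}_\text{lo}$ into its three function types and match each to an already-proven bound. For the upper-level objective, \cref{lm:r_F} gives $r_F(\mathbf{x}_t,\mathbf{z}_t) \leq 2\beta_t^{1/2}\sigma_{F,t-1}(\mathbf{x}_t,\mathbf{z}_t)$, which is exactly $\bar{r}_{F,t}(\mathbf{x}_t,\mathbf{z}_t)$. For each constraint $c$, since $(\mathbf{x}_t,\mathbf{z}_t) \in \mathcal{S}^+_t$, \cref{lm:ts_s} gives $r_c \leq 2\beta_t^{1/2}\sigma_{c,t-1} = \bar{r}_{c,t}$. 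For the lower-level objective, since $(\mathbf{x}_t,\mathbf{z}_t) \in \mathcal{P}^+_t$, \cref{lm:ts_p} gives $r_f(\mathbf{x}_t,\mathbf{z}_t) \leq \mathmybb{1}_{\mathbf{z}_t \neq \bar{\mathbf{z}}_t(\mathbf{x}_t)} 2\beta_t^{1/2}\sigma_{f,t-1}(\mathbf{x}_t,\bar{\mathbf{z}}_t(\mathbf{x}_t)) + 2\beta_t^{1/2}\sigma_{f,t-1}(\mathbf{x}_t,\mathbf{z}_t) = \bar{r}_{f,t}(\mathbf{x}_t,\mathbf{z}_t)$. The key structural observation is that the quantities $\bar{r}_{h,t}$ in \cref{def:ht} were defined to coincide term-for-term with the right-hand sides of these three lemmas, so the domination holds for all $h \in \mathcal{F}$.

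With the pointwise bound in hand, I would combine $r_t = \max_{h \in \mathcal{F}} r_h(\mathbf{x}_t, \mathbf{z}_t) \leq \max_{h \in \mathcal{F}} \bar{r}_{h,t}(\mathbf{x}_t, \mathbf{z}_t)$, and then identify the right-hand side using $h_t \triangleq \arg\max_{h \in \mathcal{F}} \bar{r}_{h,t}(\mathbf{x}_t, \mathbf{z}_t)$ from \cref{def:ht}, yielding $\max_{h \in \mathcal{F}} \bar{r}_{h,t}(\mathbf{x}_t, \mathbf{z}_t) = \bar{r}_{h_t,t}(\mathbf{x}_t, \mathbf{z}_t)$ and hence the claim.

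I do not expect a genuine obstacle, as the lemma is essentially a bookkeeping assembly of the three prior regret bounds; the only point requiring care is the probabilistic qualifier. All three component bounds, and therefore the conclusion, hold on the single high-probability event of \cref{coro:cb}; because they are all consequences of that same event, they hold simultaneously, and no further union bound or additional loss in $\delta$ is incurred. I would also verify that the indicator $\mathmybb{1}_{\mathbf{z} \neq \bar{\mathbf{z}}_t(\mathbf{x}_t)}$ is treated consistently, since it appears identically in \cref{lm:ts_p} and in \cref{def:ht}, so the $f$-case matches directly whether or not the query coincides with $\bar{\mathbf{z}}_t(\mathbf{x}_t)$.
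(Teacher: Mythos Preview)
Your proposal is correct and follows essentially the same approach as the paper: establish $r_h(\mathbf{x}_t,\mathbf{z}_t) \leq \bar{r}_{h,t}(\mathbf{x}_t,\mathbf{z}_t)$ for each $h \in \mathcal{F}$ by invoking \cref{lm:r_F}, \cref{lm:ts_s}, and \cref{lm:ts_p}, then take the maximum and identify it with $\bar{r}_{h_t,t}$ via \cref{def:ht}. Your write-up is in fact more explicit than the paper's (which compresses the per-function domination into a single sentence), and your remarks on the shared high-probability event and the indicator consistency are correct and add clarity.
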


\begin{proof}
Given \cref{def:ht}, \cref{lm:r_F}, \cref{lm:ts_s}, and \cref{lm:ts_p}, $\forall h \in \mathcal{F}$, we can see that $\bar{r}_{h, t} (\mathbf{x}_t, \mathbf{z}_t) \geq r_h (\mathbf{x}_t, \mathbf{z}_t)$. Then,
\begin{align*}
    r_t &\triangleq \max_{h \in \mathcal{F}} r_h(\mathbf{x}_t, \mathbf{z}_t) &\text{from \cref{eq:r_defn}}\\
    &\leq \max_{h \in \mathcal{F}} \bar{r}_{h, t}(\mathbf{x}_t, \mathbf{z}_t)\\
    &= \bar{r}_{h_t, t}(\mathbf{x}_t, \mathbf{z}_t).
\end{align*}

\end{proof}

\textbf{Main proof} for instantaneous regret bound in \cref{lm:ht}
\begin{proof}
By \cref{lm:r_r_bar}, if $h_t = f$, 
\begin{align*}
    r_t &\leq \bar{r}_{f,t}(\mathbf{x}_t, \mathbf{z}_t) \\
    &= \mathmybb{1}_{\mathbf{z}_t \neq \bar{\mathbf{z}}_t(\mathbf{x}_t)} 2 \beta^{1/2}_t \sigma_{f, t-1}(\mathbf{x}_t, \bar{\mathbf{z}}_t(\mathbf{x}_t)) + 2 \beta^{1/2}_t \sigma_{f, t-1}(\mathbf{x}_t, \mathbf{z}_t)  &\text{from \cref{def:ht}}\\
    &\leq 4 \beta_t^{1/2} \max(\sigma_{f, t-1}(\mathbf{x}_t, \bar{\mathbf{z}}_t(\mathbf{x}_t)), \sigma_{f, t-1}(\mathbf{x}_t, \mathbf{z}_t)) \\
    &= 4 \beta^{1/2}_t \sigma_{f, t-1}(\mathbf{x}_t, \mathbf{z}_t),
\end{align*}
where the last line holds because we reassign $\mathbf{z}_t \triangleq \bar{\mathbf{z}}_t(\mathbf{x}_t)$ if $\sigma_{f, t-1}(\mathbf{x}_t, \bar{\mathbf{z}}_t(\mathbf{x}_t)) \geq \sigma_{f, t-1}(\mathbf{x}_t, \mathbf{z}_t)$ as in \cref{eq:new_zt}.

Else if $h_t \in \mathcal{F} / \{f\}$,
\begin{align*}
    r_t &\leq \bar{r}_{h_t,t}(\mathbf{x}_t, \mathbf{z}_t) \\
    &= 2 \beta^{1/2}_t \sigma_{h_t, t-1}(\mathbf{x}_t, \mathbf{z}_t) \\
    &\leq 4 \beta^{1/2}_t \sigma_{h_t, t-1}(\mathbf{x}_t, \mathbf{z}_t).
\end{align*}
Combining, we obtain
\begin{align*}
    r_t &\leq 4 \beta_t^{1/2}\sigma_{h_t, t-1}(\mathbf{x}_t, \mathbf{z}_t)\\
    &\leq 4 \beta_t^{1/2} \max_{h \in \mathcal{F}} \sigma_{h, t-1}(\mathbf{x}_t, \mathbf{z}_t).
\end{align*}
\end{proof}

\subsection{Proof of \cref{t:blo_r}}
\label{sec:ap-blo-reg}
\begin{proof}
From \cref{lm:ht} and by Cauchy-Schwarz inequality, we derive the cumulative regret as
\begin{align*}
    R^2_T &\leq T \sum_{t=1}^T r^2_t\\
     &\leq T \sum^T_{t=1} 16 \beta_t \max_{h \in \mathcal{F}} \sigma^2_{h, t-1}(\mathbf{x}_t, \mathbf{z}_t)\\
     &\leq 4T \beta_T \sum_{h \in \mathcal{F}} \sum_{t \in T(h)} 4 \sigma^2_{h, t-1}(\mathbf{x}_t, \mathbf{z}_t) \\
     &\leq 4T \beta_{T}  \sum_{h \in \mathcal{F}} C_h \gamma_{h, T(h)}  \\
     &\leq 4T \beta_{T} \sum_{h \in \mathcal{F}} C_h \gamma_{h, T}  \\
     &\leq 4T |\mathcal{F}| \beta_T \max_{h\in \mathcal{F}} C_h \gamma_{h,T},
\end{align*}
where $T(h)$ contains the timesteps where function $h$ was queried, so $\gamma_{T(h)} \leq \gamma_T$,
and 
\begin{align*}
    R_T \leq \sqrt{4T|\mathcal{F}| \beta_T \max_{h \in \mathcal{F}} C_h \gamma_{h,T}},
\end{align*}
where $C_h \triangleq 8 / \log(1+\sigma^{-2}_h)$, and $\gamma_{h,T}$ is the maximum information gain from noisy observations of $h$ at $(\mathbf{x}_t, \mathbf{z}_t), \forall t \in [T]$. The proof methodology follows \citet{srinivas2009gaussian}.
\end{proof}

\subsection{Proof of \cref{lm:simple_r}}
\label{sec:ap_simple_r}

\begin{proof}
\begin{align*}
    r_T &\leq \min_{(\mathbf{x}_t, \mathbf{z}_t) \in \{(\mathbf{x}_{t'}, \mathbf{z}_{t'})\}_{t' \in [T]}} \max_{h \in \mathcal{F}} \bar{r}_{h,t} (\mathbf{x}_t, \mathbf{z}_t) &\text{from \cref{eq:esti} and \cref{lm:r_r_bar}}\\
    &\leq \frac{1}{T} \sum^T_{t=1} \max_{h \in \mathcal{F}} \bar{r}_{h,t} (\mathbf{x}_t, \mathbf{z}_t) \\
    &\leq \frac{1}{T} \sum^T_{t=1}  4 \beta_t^{1/2} \max_{h \in \mathcal{F}} \sigma_{h, t-1}(\mathbf{x}_t, \mathbf{z}_t) &\text{from \cref{sec:ap_ht_reg}}\\
    &\leq \sqrt{4 |\mathcal{F}| \beta_T \max_{h \in \mathcal{F}} C_h \gamma_{h,T} / T}. &\text{from \cref{sec:ap-blo-reg}}
\end{align*}
\end{proof}

\section{Experiment details}

\subsection{Baseline details}
\label{sec:ap_exp}
\textbf{TrustedRand} implements a vanilla variant of the trusted sets $\mathcal{S}^+_t$ and $\mathcal{P}^+_t$, where mean $\mu$ is used instead of upper confidence bound $u$. Query points are then randomly sampled from the trusted set variants. 

\textbf{Nested} uses the sequential least squares programming (SLSQP) optimizer for lower-level optimization, following \citet{kieffer2017bayesian} and \citet{dogan2023bilevel}, and BO with upper confidence bound acquisition function \cite{srinivas2009gaussian} at the upper level. The lower-level problem is solved to convergence at each upper-level query. Note that gradients are approximated for SLSQP, which can only work on continuous functions.

\subsection{Implementation details}
\label{sec:exp_details}
GP with Mat\'ern 5/2 kernel was used, and the GP hyperparameters were automatically tuned at each iteration using maximum likelihood estimation on the past observations. The hyperparameters include length scale and prior mean. The prior mean initialized to 0 for all experiments since the output is already normalized. The initial length scale and other parameters for each experiment are set according to \autoref{table:exp_details}. For SMD2, energy, and chemical experiment, we sampled from $\bar{\mathcal{P}}_t \triangleq \{(\mathbf{x}, \bar{\mathbf{z}}_t(\mathbf{x})) \; \forall \mathbf{x} \in \mathcal{X} \}$ instead of $\mathcal{P}^+_t$ as it was empirically found to be better.

\begin{table} []
\centering
\begin{tabular}{ c c c c c} 
\toprule
   experiment  &  length scale prior  & $d_{\mathcal{X}}$ & $d_{\mathcal{Z}}$ & discrete points per dimension \\
   \midrule
   BraninHoo+GoldsteinPrice  &  0.2 & 1 & 1 & 100 \\
   SMD2 & 0.7 & 2 & 3 & 25 \\
   SMD6 & 0.2 & 2 & 3 & 25 \\
   SMD12 & 0.4 & 2 & 3 & 16 \\
   Energy & 0.4 & 2 & 2 & 15 \\
   Chemical & 0.8 & 1 & 3 & 10 \\
\bottomrule
\end{tabular}
\caption{Experiment parameters}
\label{table:exp_details}
\end{table}

\subsection{Edits to SMD2, SMD6, SMD12}
\label{sec:smd_details}
The selected SMD problems were adapted so the input ranges from 0 to 1, and the outputs have a mean of 0 and standard deviation of 1, for parameters $p = 1, r=1, q=2$, while ensuring that their characteristics and optimal points remain the same. The upper- and lower-level objective functions of SMD each have 3 components. The following only records edits to the original SMD problems. Refer to \citet{sinha2014test} for the original SMD problems.

Let $\mathbf{x} = [\hat{x}_{u1}, \hat{x}_{u2}]$  and $\mathbf{z} = [\hat{x}_{l1}, \hat{x}_{l2}]$, $\mathbf{x}, \mathbf{z} \in [0, 1]^d$. 
 
\textbf{SMD2}. To bound the output for the given domain, we set
\begin{align*}
    F_3 &\triangleq -\sum^r_{i=1}(x^i_{u2})^2 - \sum^r_{i=1}(x^i_{u2} - \log(0.99*x^i_{l2} + 0.01))^2,\\
    f_3 &\triangleq \sum^r_{i=1}x^i_{u2} - \log(0.99*x_{l2}^i + 0.01)^2,
\end{align*}
where $\hat{x}_{u1} \triangleq (x_{u1} + 1)/3$, $\hat{x}_{u2} \triangleq (x_{u2} + 5)/6$, $\hat{x}_{u1} \triangleq (x_{l1} + 1)/3$, and $\hat{x}_{l2} \triangleq x_{l2} / e$. 

\textbf{SMD6}. The different functions have imbalanced ranges. To balance the different functions in $f$, we set
\begin{align*}
    \hat{f}_1 &\triangleq f_1 / d \\
    \hat{f}_2 &\triangleq f_2 / d^2 \\
    \hat{f}_3 &\triangleq f_3 / d,
\end{align*}
where $d = 3$, and use $\hat{f} \triangleq \hat{f}_1 + \hat{f}_2 + \hat{f}_3$ as the lower-level objective function. $\hat{x}_b \triangleq (x_b + 1)/3$, for $x_b \in \{x_{u1}, x_{u2}, x_{l1}, x_{l2}\}$.

\textbf{SMD12}. To bound the outputs in the domain, we set
\begin{align*}
    F_3 &\triangleq \sum^r_{i=1} (x^i_{u2} - 2)^2 + \sum^r_{i=1} \tanh|x^i_{l2}| - \sum^r_{i=1} (x^i_{u2} - \tanh x^i_{l2})^2\\
    f_3 &\triangleq \sum^r_{i=1}(x^i_{u2} - \tanh x^i_{l2})^2
\end{align*}

We also edited the first upper level constraint to  $x^i_{u2} - \tanh x^i_{l2} \geq 1, \; \forall i \in \{1, ...,r\}$, so it becomes an active constraint. One of the lower level constraint was also edited to bound its output range: $x^j_{l1} - \sum^q_{i=1, i\neq j} (x^i_{l1})^3 \geq 0 \; \forall j \in \{1,...,q\}$. We normalize $\hat{x}_{u1} \triangleq (x_{u1} + 5)/15, \hat{x}_{u2} \triangleq (x_{u2} + 1)/2,  \hat{x}_{l1} \triangleq (x_{l1} + 5)/15$, and $\hat{x}_{l2} \triangleq (x_{l2} + \pi/2)/\pi$.

After the following adaptations, we take the mean over input dimensions to ensure that function values do not increase with dimensions. Finally, we normalize the outputs.

\subsection{Energy market}
\label{sec:energy_details}

Let $\mathbf{x} \triangleq [\mathbf{x}_1, \mathbf{x}_2]$, where $\mathbf{x}_1$ denotes a price to bid and $\mathbf{x}_2$ denotes a quantity in MW to supply at bid price. $\mathbf{x}_1 \in (0.01, 0.5), \mathbf{x}_2 \in (200, 500)$. We simulate a network with 3 generators that has to fulfill an estimated demand schedule for 3 periods. The generators' parameters are given in \autoref{tab:my_label}, where $\mathbf{z} \triangleq [\mathbf{z}_1, \mathbf{z}_2]$ are the lower-level variables. $\mathbf{z}_1 \in (0.0, 0.2), \mathbf{z}_2 \in (0.5, 1.5)$. These two variables were selected as a proxy for auxiliary concerns such as efficiency and maintenance costs, on top of operational costs. 

The lower-level objective function is denoted as
\begin{align*}
    f(\mathbf{x}, \mathbf{z}) \triangleq -\text{cost}(\mathbf{x}, \mathbf{z}) - 2.5*w_r(\mathbf{z}_1)  - 1.5*w_w(\mathbf{z}_2),
\end{align*}
where $\text{cost}(\mathbf{x}, \mathbf{z})$ is the operational cost of producing $\mathbf{z}_2$MW of power, simulated by PyPSA. $w_r(\mathbf{z}_1) \triangleq \exp(5 * \mathbf{z}_1) - 1$ and $w_w(\mathbf{z}_2) \triangleq -(\log(-0.75*\mathbf{z}_2 + 1.15) - (-0.75*\mathbf{z}_2 + 1.15)) - 0.797$, where $w_r$ and $w_w$ are different nonlinear weighting functions applied to $\mathbf{z}$. If dispatch is not feasible at a point, we set the lower-level objective value with an arbitrary large negative number, and the upper-level objective value at 0.

The upper-level objective function measures profit as
\begin{align*}
    F(\mathbf{x}, \mathbf{z}) \triangleq \mathbf{x}_1 * \mathbf{x}_2 * \text{df}(\mathbf{x}_t) - \text{cost}(\mathbf{x}, \mathbf{z}),
\end{align*}
where $\text{df}(\mathbf{x}_t) \triangleq \min(1, \exp(-10\mathbf{x}_1+0.25))$ returns a factor that simulates the demand response of consumers. This implies a disincentive for providers to bid at high prices, because consumers might choose to reduce their electricity usage or look for alternative providers.

We discretized the input space into 15 at each dimension.

\begin{table}
    \centering
    \begin{tabular}{c c c c c c}
        \toprule
        type & nominal power & marginal cost &  quadratic marginal cost & ramp limit & max p factor \\
        \midrule
        coal & 200 & 0.005 & 0.0005 & $\mathbf{z}_1$ & -\\
        gas & 100 & 0.015 & 0.0005 & 0.5 &-\\
        wind & 60 &0.02 & 0.005& - & $\mathbf{z}_2$\\
        \bottomrule
    \end{tabular}
    \caption{Parameters input into PyPSA generator. `max p factor' refer to `p\_max\_pu', the maximum power at a snapshot given as a fraction of nominal power.}
    \label{tab:my_label}
\end{table}

\subsection{Chemical process}
\label{sec:ap_chem}

The flowsheet used is shown in \autoref{fig:chem_flowsheet}, where the output of reactor R101 contains a mix of Methyl Acetate, unreacted DME, and other by-products, and the distillation column C101 separates these products to obtain high purity Methyl Acetate. The flowsheet was adapted from \citet{chemsep}, where the recycle streams have been removed to simplify the process. CO and DME are fed in at a fixed flow rate and concentration for all experiments, as indicated in the figure. The distillation feed is always at level 2, and we fixed the output concentration of Methyl Acetate at 99.9\%.  Note that we can simulate the reactor R101 without the column C101.

The upper- and lower-level parameters to be optimized are defined in \cref{tab:chem_vars}. We discretized the input space into 10 at each dimension, and the variables are normalized to $[0, 1]$.

\begin{figure}[t]
    \centering
    \includegraphics[width=\textwidth]{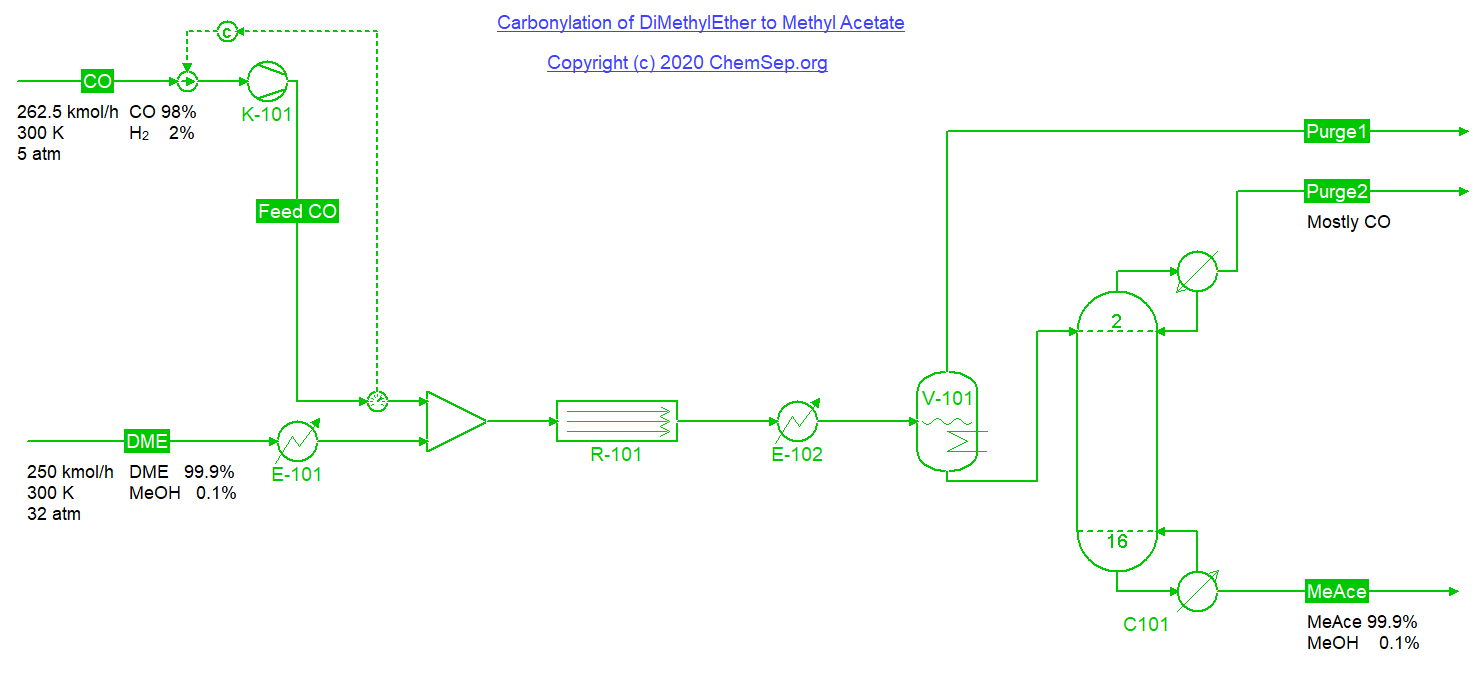}
    \caption{Flowsheet of chemical process. R-101 is the reactor, and C101 is the distillation column.}
    \label{fig:chem_flowsheet}
\end{figure}

\begin{table}[t]
    \centering
    \begin{tabular}{c c c c c}
        \toprule
         name & min & max & normalized symbol\\
        \midrule
        Number of levels in distillation column & 5 & 23 & $\mathbf{x}_0 \in [0, 1]$\\
        Temperature of reactor (K) & 455 & 500 & $\mathbf{z}_0 \in [0, 1]$ \\
        Number of heating tubes in reactor & 600 & 1500  & $\mathbf{z}_1 \in [0, 1]$ &\\
        Diameter of heating tubes (m) &0.02 & 0.065 & $\mathbf{z}_2 \in [0, 1]$ \\
        \bottomrule
    \end{tabular}
    \caption{Parameters of the chemical experiment.}
    \label{tab:chem_vars}
\end{table}

Let $\text{sim}_{\text{R101}}(\mathbf{x}, \mathbf{z})$ be the simulated mass flow of Methyl Acetate (kg/s) at the output of the reactor R101, and $\text{sim}_{\text{C101}}(\mathbf{x}, \mathbf{z})$ be the simulated mass flow of Methyl Acetate (kg/s) at the MeAce output of the column C101.

The lower-level objective function is denoted as
\begin{align*}
    f(\mathbf{x}, \mathbf{z}) \triangleq \text{sim}_\text{R101}(\mathbf{x}, \mathbf{z}) - 1\text{e-}3 * \mathbf{z}_1^4,
\end{align*}
where the second term is a penalty on higher temperatures to account for energy costs.

The upper-level objective function is then denoted as
\begin{align*}
    f(\mathbf{x}, \mathbf{z}) \triangleq \text{sim}_\text{C101}(\mathbf{x}, \mathbf{z}) - 1\text{e-}4 * \mathbf{x}_0^4,
\end{align*}
where the second term is a penalty on more levels in the distillation column as it is associated with higher costs. The higher costs could be due to maintenance, energy consumption or equipment cost.

\subsection{Computational resources}
The experiments in this paper were done on a computer with AMD Ryzen 7 5700X 8-Core Processor and 64 GB of RAM, unless otherwise specified.

\section{Complexity and efficiency of BILBO}
\label{sec:ap_comp_effi}

\subsection{Computational complexity of BILBO}
For a discretized implementation, the computational complexity of BILBO is affected by the computational complexity of:
\begin{itemize}[noitemsep]
    \item Gaussian processes, $\mathcal{O}(n^3)$,
    \item Updating trusted sets, $\mathcal{O}(|\mathcal{F}|c)$,
    \item Selecting function query, $\mathcal{O}(|\mathcal{F}|c)$,
    \item Optimizing the acquisition function, $\mathcal{O}(c)$,
\end{itemize}
where $n$ is the number of observations, $c$ is the number of discretized points, and $|\mathcal{F}|$ is the number of blackbox functions.

In a uniform grid discretization, which is used in our implementation, if each dimension is divided into $m$ points, then the cardinality is $c = m^d$, where $d$ is the number of dimensions. Thus, dimensionality $d$ exponentially affects computational complexity when using uniform grid discretization. Adaptive discretization may be able to mitigate the exponential factor of dimensionality, where effective dimension $d_\text{eff} \ll d$. 

\subsection{Time efficiency of BILBO}

While our experiments have shown that BILBO is more sample efficient than nested methods, BILBO does require more computational cost to update trusted sets and select function queries and query points. In the presence of inexpensive lower-level evaluations, BILBO's time efficiency can be lower than that of a nested method. However, do note that BILBO's motivation is for settings with expensive blackbox evaluations, where evaluations can be real-world experiments or simulators that are costly or slow.

In addition, BILBO has advantages in scenarios with noisy observations or multiple lower-level solutions. Nested methods only solve for one solution in each lower-level optimization, and it can be suboptimal in these scenarios. On the other hand, BILBO manages the uncertainty of lower-level estimates in a principled way and allows for multiple lower-level estimates, possibly providing better lower-level estimates to reduce regret more effectively even if each iteration takes more time.

In terms of wall-clock time, on a Mac Studio with M2 Ultra over 5 runs and 40 seconds total runtime, for the 2-dimensional BraninHoo+GoldsteinPrice experiment, the average time per BILBO iteration is 0.131s, which is about 1.5 times slower than TrustedRand and about 26 times slower than Nested. \cref{fig:exp_time} shows how regret decreases over wall-clock time. We observed that while the regret for Nested is smaller than BILBO in the initial 5 seconds, Nested's regret quickly plateaus due to suboptimal lower-level estimates of the multimodal lower-level objective. BILBO outperforms Nested subsequently as BILBO converges to a more optimal solution.

\begin{figure}[t]
    \centering
    \includegraphics[width=0.35\textwidth]{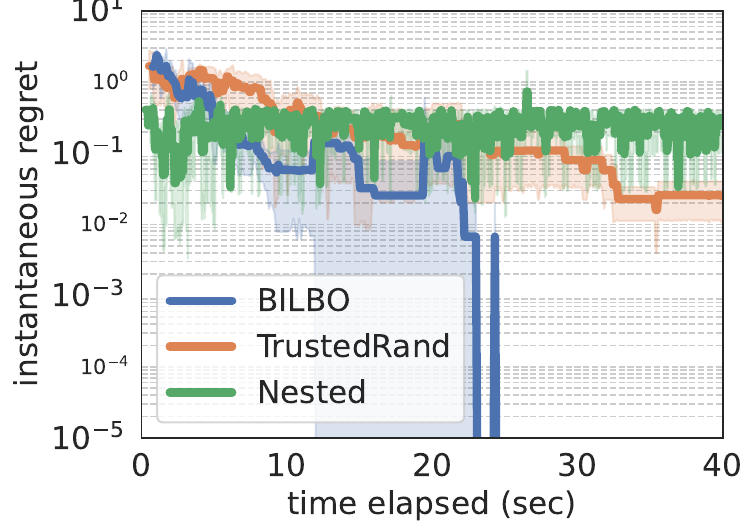}
    \caption{Regret against wall-clock time for the BraninHoo+GoldsteinPrice experiment}
    \label{fig:exp_time}
\end{figure}

\end{document}